\title{DeepLTL: Learning to Efficiently Satisfy\\Complex LTL Specifications for Multi-Task RL}
\author{Mathias Jackermeier, Alessandro Abate\\
Department of Computer Science, University of Oxford\\
\texttt{\{mathias.jackermeier,alessandro.abate\}@cs.ox.ac.uk}
}
\newcommand{\todo}[1]{}
\renewcommand{\todo}[1]{{\color{orange} {[M: #1]}}}
\newcommand{\todom}[1]{}
\renewcommand{\todom}[1]{{\color{orange} \textbf{Mathias}: {#1}}}
\newcommand{\dash}{\,---\,}
\newcommand*{\given}{\,|\,}
\DeclareMathOperator{\Tr}{Tr}
\DeclareMathOperator{\words}{Words}
\newcommand{\always}{\mathsf{G}\,}
\newcommand{\event}{\mathsf{F}\,}
\newcommand{\altevent}{\mathsf{F}}
\newcommand{\nex}{\mathsf{X}\,}
\newcommand{\until}{\;\mathsf{U}\;}
\newcommand{\ap}{(2^{AP})^\omega}
\newcommand{\gror}{\;|\;}
\DeclareMathOperator*{\argmax}{arg\,max}
\DeclareMathOperator*{\E}{\mathbb E}
\DeclareMathOperator*{\supp}{supp}
\DeclarePairedDelimiter\abs{\lvert}{\rvert}%
\DeclarePairedDelimiter\norm{\lVert}{\rVert}%
\let\oldabs\abs
\def\abs{\@ifstar{\oldabs}{\oldabs*}}
\let\oldnorm\norm
\def\norm{\@ifstar{\oldnorm}{\oldnorm*}}
\newtheorem{theorem}{Theorem}
\newtheorem{lemma}{Lemma}
\theoremstyle{definition}
\newtheorem{example}{Example}
\newtheorem{definition}{Definition}
\newtheorem{problem}{Problem}
\theoremstyle{remark}
\newtheorem*{remark*}{Remark}
\definecolor{myBlue}{HTML}{0048cd}
\begin{document}

\maketitle

\begin{abstract}
    Linear temporal logic (LTL) has recently been adopted as a powerful formalism for specifying complex, temporally extended tasks in multi-task reinforcement learning (RL). However, learning policies that efficiently satisfy arbitrary specifications not observed during training remains a challenging problem. Existing approaches suffer from several shortcomings: they are often only applicable to finite-horizon fragments of LTL, are restricted to suboptimal solutions, and do not adequately handle safety constraints. In this work, we propose a novel learning approach to address these concerns. Our method leverages the structure of B\"uchi automata, which explicitly represent the semantics of LTL specifications, to learn policies conditioned on sequences of truth assignments that lead to satisfying the desired formulae. Experiments in a variety of discrete and continuous domains demonstrate that our approach is able to zero-shot satisfy a wide range of finite- and infinite-horizon specifications, and outperforms existing methods in terms of both satisfaction probability and efficiency. Code available at: {\small \url{https://deep-ltl.github.io/}}
\end{abstract}

\section{Introduction}
One of the fundamental challenges in artificial intelligence (AI) is to create agents capable of following arbitrary instructions. While significant research efforts have been devoted to designing reinforcement learning (RL) agents that can complete tasks expressed in natural language~\citep{oh2017ZeroShot,goyal2019Using,luketina2019survey}, recent years have witnessed increased interest in \textit{formal} languages to specify tasks in RL~\citep{andreas2017Modular,camacho2019LTL,jothimurugan2021Compositional}. Formal specification languages offer several desirable properties over natural language, such as well-defined semantics and compositionality, allowing for the specification of unambiguous, structured tasks~\citep{vaezipoor2021LTL2Action,leon2022Nutshell}.
Recent works have furthermore shown that it is possible to automatically translate many natural language instructions to formal languages, providing interpretable yet precise representations of tasks~\citep{pan2023DataEfficient,liu2023Grounding}.

\textit{Linear temporal logic} (LTL)~\citep{pnueli1977temporal} in particular has been adopted as a powerful formalism for instructing RL agents~\citep{hasanbeig2018LogicallyConstrained,araki2021Logical,voloshin2023Eventual}. LTL is an appealing specification language that allows for the definition of tasks in terms of high-level features of the environment. These tasks can utilise complex compositional structure, are inherently temporally extended (i.e.\ non-Markovian), and naturally incorporate safety constraints.


While several approaches have been proposed to learning policies capable of zero-shot executing arbitrary LTL specifications in a multi-task RL setting~\citep{kuo2020Encoding,vaezipoor2021LTL2Action,qiu2023Instructing,liu2024Skill}, they suffer from several limitations. First, most existing methods are limited to subsets of LTL and cannot handle infinite-horizon (i.e.\ $\omega$-regular) specifications, which form an important class of objectives including \textit{persistence} (eventually, a desired state needs to be reached forever), \textit{recurrence} (a set of states needs to be reached infinitely often), and \textit{response} (whenever a particular event happens, a task needs to be completed)~\citep{manna1990hierarchy}. Second, many current techniques are \textit{myopic}, that is, they solve tasks by independently completing individual subtasks, which can lead to inefficient, globally suboptimal solutions~\citep{vaezipoor2021LTL2Action}. Finally, existing approaches often do not adequately handle safety constraints of specifications, especially when tasks can be completed in multiple ways with different safety implications. For an illustration of these limitations, see \cref{fig:limitations}.

In this paper, we develop a novel approach to learning policies for zero-shot execution of LTL specifications that addresses these shortcomings. Our method exploits the structure of B\"uchi automata to non-myopically reason about ways of completing a (possibly infinite-horizon) specification, and to ensure that safety constraints are satisfied. Our main contributions are as follows:
\begin{itemize}
    \item we develop the \textit{first} non-myopic approach to learning multi-task policies for zero-shot execution of LTL specifications that is applicable to infinite-horizon tasks;
    \item we propose a novel representation of LTL formulae based on reach-avoid sequences of truth assignments, which allows us to learn policies that intrinsically consider safety constraints;
    \item we propose a novel neural network architecture that combines DeepSets and RNNs to condition the policy on the desired specification;
    \item lastly, we empirically validate the effectiveness of our method on a range of environments and tasks, demonstrating that it outperforms existing approaches in terms of satisfaction probability and efficiency.
\end{itemize}

\section{Background}

\paragraph{Reinforcement learning.}
We model RL environments using the framework of \textit{Markov decision processes} (MDPs). An MDP is a tuple $\mathcal M = (\mathcal S, \mathcal A, \mathcal P, \mu, r, \gamma)$, where $\mathcal S$ is the state space, $\mathcal A$ is the set of actions, $\mathcal P\colon \mathcal S\times\mathcal A\to\Delta(\mathcal S)$ is the \textit{unknown} transition kernel, $\mu\in\Delta(\mathcal S)$ is the initial state distribution, $r\colon \mathcal S\times\mathcal A\times\mathcal S\to\mathbb R$ is the reward function, and $\gamma\in[0,1)$ is the discount factor.

We denote the probability of transitioning from state $s$ to state $s'$ after taking action $a$ as $\mathcal P(s'\given s, a)$. A (memoryless) \textit{policy} $\pi\colon \mathcal S\to\Delta(\mathcal A)$ is a map from states to probability distributions over actions. Executing a policy $\pi$ in an MDP gives rise to a trajectory $\tau = (s_0, a_0, r_0, s_1, a_1, r_1, \dots)$, where $s_0\sim \mu$, $a_t\sim\pi(\,\cdot\given s_t),$ $s_{t+1}\sim \mathcal P(\,\cdot\given s_t, a_t)$, and $r_t = r(s_t, a_t, s_{t+1})$. The goal of RL is to find a policy $\pi^*$ that maximises the \textit{expected discounted return}
$
    J(\pi) = \E_{\tau\sim\pi}\left[\sum_{t=0}^\infty \gamma^t r_t\right],
$
where we write $\tau\sim\pi$ to indicate that the distribution over trajectories depends on the policy $\pi$. The \textit{value function} of a policy $V^\pi(s) = \E_{\tau\sim\pi}\left[\sum_{t=0}^\infty \gamma^t r_t \mid s_0 = s\right]$ is defined as the expected discounted return starting from state $s$ and following policy $\pi$ thereafter.


\begin{figure}
    \centering
    \begin{subfigure}{0.2\textwidth}
        \centering
        \includegraphics[height=2.6cm]{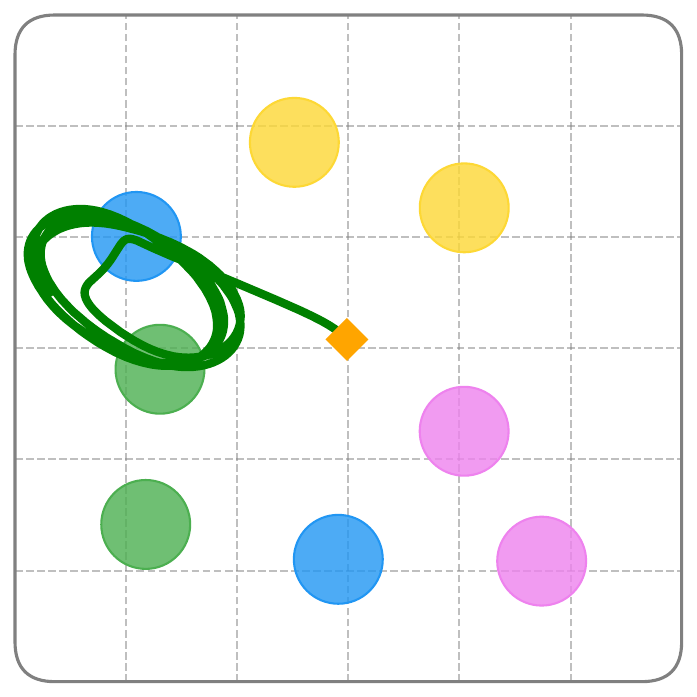}
        \caption{$\omega$-regular tasks}
    \end{subfigure}
    \hspace{1cm}
    \begin{subfigure}{0.2\textwidth}
        \centering
        \includegraphics[height=2.6cm]{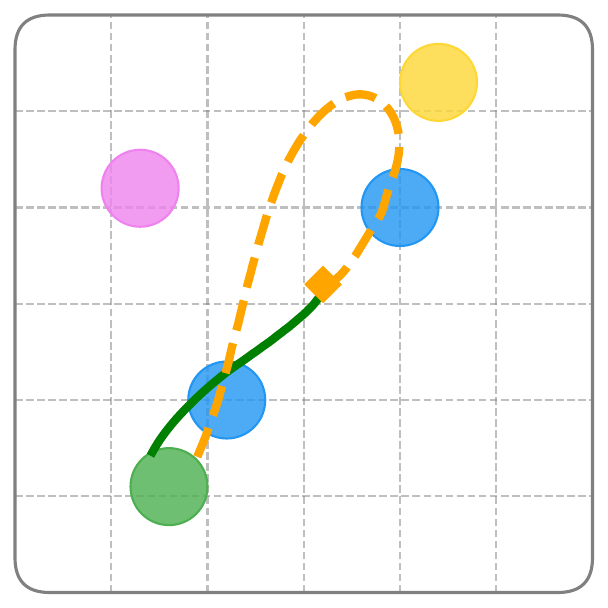}
        \caption{Optimality}
        \label{fig:optimality}
    \end{subfigure}
    \hspace{1cm}
    \begin{subfigure}{0.2\textwidth}
        \centering
        \includegraphics[height=2.6cm]{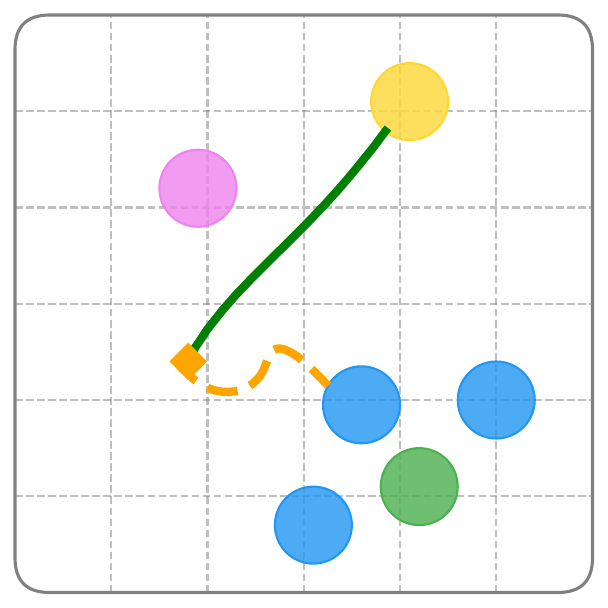}
        \caption{Safety}
        \label{fig:safety}
    \end{subfigure}
    \caption{Limitations of existing methods, illustrated via trajectories in the \textit{ZoneEnv} environment. The initial agent position is denoted as an orange diamond. (a) Most existing approaches cannot handle infinite-horizon tasks, such as $\always\event\mathsf{blue}\land\always\event\mathsf{green}$. (b) Given the formula $\event (\mathsf{blue}\land \event \mathsf{green})$, a \textit{myopic} approach produces a suboptimal solution (orange line). We prefer the more efficient green trajectory. (c) Given the task $(\event\mathsf{green}\lor\event\mathsf{yellow})\land\always\neg\mathsf{blue}$, the agent should aim to reach the yellow region instead of the green region, since this is the safer option.
    Many existing approaches are unable to perform this sort of planning.
    }
    \label{fig:limitations}
\end{figure}

\paragraph{Linear temporal logic.}
Linear temporal logic (LTL)~\citep{pnueli1977temporal} provides a formalism to precisely specify properties of infinite trajectories. LTL formulae are defined over a set of atomic propositions $AP$, which describe high-level features of the environment.
The syntax of LTL formulae is recursively defined as
\begin{equation*}
    \mathsf{true} \gror a \gror \varphi \land \psi \gror \neg \varphi \gror \nex\varphi \gror \varphi\;\mathsf{U}\;\psi
\end{equation*}
where $a\in AP$ and $\varphi$ and $\psi$ are themselves LTL formulae. $\land$ and $\neg$ are the Boolean operators conjunction and negation, which allow for the definition of all standard logical connectives. The temporal operators $\nex$ and $\until$ intuitively mean ``next'' and ``until''. We define the two temporal modalities $\event$~(``eventually'') and $\always$ (``always'') as $\event\varphi \equiv \mathsf{true}\;\mathsf{U}\;\varphi$ and $\always\varphi \equiv \neg\event\neg\varphi$.

The semantics of LTL align with the intuitive meanings of its operators. For example, in the \textit{ZoneEnv} environment depicted in \cref{fig:limitations}, the atomic propositions $AP$ correspond to coloured regions. We can naturally express many desirable tasks as LTL specifications, such as reaching a blue region ($\event \mathsf{blue}$), avoiding blue until a yellow region is reached ($\neg \mathsf{blue} \until \mathsf{yellow}$), reaching and remaining in a green region ($\event \always \mathsf{green}$), or oscillating between blue and green regions while avoiding yellow ($\always \event \mathsf{green} \land \always \event \mathsf{blue} \land \always \neg \mathsf{yellow}$). The latter two examples represent \textit{infinite-horizon} specifications, which describe behaviour over an infinite time horizon.

Formally, the satisfaction semantics of LTL are defined via a recursive satisfaction relation $w\models\varphi$ on infinite sequences $w$ of truth assignments\footnote{An \textit{assignment} $a$ is a subset of $AP$. Propositions $p\in a$ are assigned \textit{true}, whereas $p\not\in a$ are assigned \textit{false}.} over $AP$ (i.e.\ $\omega$-words over $2^{AP}$) (see \cref{app:ltl_semantics} for details). To ground LTL specifications in an MDP, we assume access to a \textit{labelling function} $L\colon \mathcal S\to 2^{AP}$, which returns the atomic propositions that are true in a given state. A trajectory $\tau$ is mapped to a sequence of assignments via its trace $\Tr(\tau) = L(s_0)L(s_1)\ldots\;$, and we write $\tau\models\varphi$ as shorthand for $\Tr(\tau)\models\varphi$. The \textit{satisfaction probability} of an LTL formula $\varphi$ under policy $\pi$ is then defined as
$
    \Pr(\pi\models\varphi) = \E_{\tau\sim\pi}\bigl[\mathbb 1[\tau\models\varphi]\bigr].
$

\paragraph{B\"uchi automata.}
\label{sec:ldba}

A more practical way of reasoning about the semantics of LTL formulae is via \textit{B\"uchi automata}~\citep{buchi1966Symposium}, which are specialised automata that can be constructed to keep track of the progress towards satisfying a specification. In particular, in this work we focus on \textit{limit-deterministic B\"uchi automata} (LDBAs)~\citep{sickert2016LimitDeterministic}, which are particularly amenable to be employed with MDPs. 
An LDBA is a tuple $\mathcal B = (\mathcal Q, q_0, \Sigma, \delta, \mathcal F, \mathcal E)$, where $\mathcal Q$ is a finite set of states, $q_0\in\mathcal Q$ is the initial state, $\Sigma = 2^{AP}$ is a finite alphabet, $\delta\colon \mathcal Q\times(\Sigma\cup\mathcal E)\to \mathcal Q$ is the transition function, and $\mathcal F$ is the set of accepting states.
Additionally, $\mathcal Q$ is composed of two disjoint subsets $\mathcal Q = \mathcal Q_N \uplus \mathcal Q_D$ such that $\mathcal F\subseteq \mathcal Q_D$ and $\delta(q,\alpha)\in\mathcal Q_D$ for all $q\in\mathcal Q_D$ and $\alpha\in \Sigma$. The set $\mathcal E$ is an indexed set of $\varepsilon$-transitions (a.k.a\ jump transitions), which transition from $\mathcal Q_N$ to $\mathcal Q_D$ without consuming any input, and there are no other transitions from $\mathcal Q_N$ to $\mathcal Q_D$.

\begin{wrapstuff}[type=figure,width=.35\textwidth]
  \centering
  \vspace{-.5cm}
    \resizebox{\textwidth}{!}{
        \begin{tikzpicture}[->,>=stealth',shorten >=1pt,auto,semithick,]
            \node[state,initial] (q0) {$q_0$};
            \node[state,accepting] (q1) [above right=0.4cm and 1.4cm of q0] {$q_1$};
            \node[state,accepting] (q2) [below right=0.4cm and 1.4cm of q0] {$q_2$};
            \node[state] (q3) [right=1.2cm of q2] {$\bot$};
            \path (q0) edge [loop above] node {$\neg b$} (q0)
                  (q0) edge node {$b$} (q1)
                  (q0) edge [below,pos=0.3] node {$\varepsilon_{q_2}$} (q2)
                  (q1) edge [loop above] node {$\top$} (q1)
                  (q2) edge [loop above] node {$a$} (q2)
                  (q2) edge [above] node {$\neg a$} (q3)
                  (q3) edge [loop above] node {$\top$} (q3);
        \end{tikzpicture}
    }
    \caption{LDBA for the formula $(\event\always \mathsf a) \lor \event \mathsf b$.}
    \label{fig:ldba}
\end{wrapstuff}
A \textit{run} $r$ of $\mathcal B$ on the $\omega$-word $w$ is an infinite sequence of states in $\mathcal Q$ respecting the transition function. An infinite word $w$ is \textit{accepted} by $\mathcal B$ if there exists a run $r$ of $\mathcal B$ on $w$ that visits an accepting state infinitely often. For every LTL formula $\varphi$, we can construct an LDBA $\mathcal B_\varphi$ that accepts exactly the words satisfying $\varphi$~\citep{sickert2016LimitDeterministic}.

\begin{example}
    \cref{fig:ldba} depicts an LDBA for the formula $(\event\always \mathsf a) \lor \event \mathsf b$. The automaton starts in state $q_0$ and transitions to the accepting state $q_1$ upon observing the proposition $b$. Once it has reached $q_1$, it stays there indefinitely. Alternatively, it can transition to the accepting state $q_2$ without consuming any input via the $\varepsilon$-transition. Once in $q_2$, the automaton accepts exactly the words where $a$ is true at every step. \qed
\end{example}

    

\section{Problem setting}
Our high-level goal is to find a specification-conditioned policy $\pi|\varphi$ that maximises the probability of satisfying arbitrary LTL formulae $\varphi$. Formally, we introduce an arbitrary distribution $\xi$ over LTL specifications $\varphi$, and aim to compute the optimal policy
\begin{equation}\label{eq:max}
    \pi^* = \argmax_{\pi}\E_{\substack{\varphi\sim\xi,\\\tau\sim\pi|\varphi}}\bigl[\mathbb 1[\tau\models\varphi]\bigr].
\end{equation}

We now introduce the necessary formalism to find solutions to \cref{eq:max} via reinforcement learning.

\begin{definition}[Product MDP]
    The \textit{product MDP} $\mathcal M^\varphi$ of an MDP $\mathcal M$ and an LDBA $\mathcal B_\varphi$ synchronises the execution of $\mathcal M$ and $\mathcal B_\varphi$. Formally, $\mathcal M^\varphi$ has the state space $\mathcal S^\varphi = \mathcal S\times\mathcal Q$, action space $\mathcal A^\varphi = \mathcal A\times \mathcal E$, initial state distribution $\mu^\varphi(s,q) = \mu(s)\cdot \mathbb 1[q = q_0]$, and transition function
    \begin{equation*}
        \mathcal P^\varphi\left((s',q')\given (s,q), a\right) = 
        \begin{cases}
            \mathcal P(s'\given s, a) &\text{ if } a\in\mathcal A\land q'= \delta(q, L(s)),\\
            1 & \text{ if } a = \varepsilon_{q'}\land q'=\delta(q, a)\land s' = s,\\
            0 & \text{ otherwise.}
        \end{cases}
    \end{equation*}
\end{definition}

The product MDP $\mathcal M^\varphi$ extends the state space of $\mathcal M$ in order to keep track of the current state of the LDBA\@. This allows us to consider only \textit{memoryless} policies that map tuples $(s,q)$ of MDP and LDBA states to actions, since the LDBA takes care of adding the memory necessary to satisfy $\varphi$~\citep{baier2008Principles}. Quite importantly, note that in practice we never build the product MDP explicitly, but instead simply update the current LDBA state $q$ with the propositions $L(s)$ observed at each time step. Also note that the action space in $\mathcal M^\varphi$ is extended with $\mathcal E$ to allow the policy to follow $\varepsilon$-transitions in $\mathcal B_\varphi$ without acting in the MDP\@. Trajectories in $\mathcal M^\varphi$ are sequences $\tau = \left((s_0,q_0),a_0,(s_1,q_1),a_1,\ldots\right)$, and we denote as $\tau_q$ the projection of $\tau$ onto the LDBA states $q_0,q_1,\ldots\;$. We can restate the satisfaction probability of formula $\varphi$ in $\mathcal M^\varphi$ as 
\begin{equation*}
    \Pr(\pi\models\varphi) = \E_{\tau\sim\pi}\bigl[\mathbb 1\left[\inf(\tau_q) \cap \mathcal F \neq \emptyset\right]\bigr],
\end{equation*}
where $\inf(\tau_q)$ denotes the set of states that occur infinitely often in $\tau_q$.

An optimal policy can be found via goal-conditioned reinforcement learning in the product MDP as follows: given some probability distribution $\xi$ over LTL formulae, sample $\varphi\sim\xi$ and trajectories $\tau\sim\pi|\varphi$, assigning a positive reward of 1 whenever the policy visits an accepting state in the LDBA corresponding to $\varphi$, and 0 otherwise. To ensure that the reward-maximising policy $\pi_{\textit{RL}}^*$ indeed maximises \cref{eq:max}, one can employ \textit{eventual discounting}~\citep{voloshin2023Eventual} and only discount visits to accepting states in the automaton and not the steps between those visits (for a further discussion, see \cref{app:eventual_discounting}). With this, we obtain the following bound on the performance of $\pi_{\textit{RL}}^*$:

\begin{theorem}[Corollary of \cite{voloshin2023Eventual}, Theorem 4.2]\label{theo}
    For any $\gamma\in (0, 1)$ we have
    \begin{equation*}
        \sup_\pi\E_{\varphi\sim\xi}\left[ \Pr(\pi\models\varphi) \right] - \E_{\varphi\sim\xi}\left[ \Pr(\pi_{\textit{RL}}^*\models\varphi) \right]
        \leq
        2\log(\frac{1}{\gamma})\sup_{\pi} O_\pi,
    \end{equation*}
    where
    $
        O_\pi = \E_{\varphi\sim\xi, \tau\sim\pi|\varphi}\bigl[ |\{q\in\tau_q : q\in\mathcal F_{\mathcal B_\varphi}\}|\,\big|\, \tau\not\models\varphi \bigr]
    $
    is the expected number of visits to accepting states for trajectories that do not satisfy a specification.
\end{theorem}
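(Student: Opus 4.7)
The plan is to derive the stated bound by a pointwise application of Voloshin et al.'s Theorem 4.2 to each specification $\varphi$ in the support of $\xi$, followed by integration against $\xi$. The key observation enabling this is that the policies under consideration are specification-conditioned (formalised via the product MDP $\mathcal M^\varphi$), so both the supremum and the reward-maximising policy $\pi_{\textit{RL}}^*$ decouple across $\varphi$.

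First, I would justify the interchange of supremum and expectation. Since a policy $\pi|\varphi$ on the product MDP $\mathcal M^\varphi$ can be selected independently for each $\varphi$, we have
\begin{equation*}
    \sup_\pi \E_{\varphi\sim\xi}\left[\Pr(\pi\models\varphi)\right] = \E_{\varphi\sim\xi}\left[\sup_{\pi|\varphi}\Pr(\pi|\varphi\models\varphi)\right].
\end{equation*}
Likewise, $\pi_{\textit{RL}}^*$ maximises the (eventually discounted) expected return on each product MDP separately, so $\E_{\varphi\sim\xi}[\Pr(\pi_{\textit{RL}}^*\models\varphi)] = \E_{\varphi\sim\xi}[\Pr(\pi_{\textit{RL}}^*|\varphi\models\varphi)]$. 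Subtracting, the left-hand side of the claim equals $\E_{\varphi\sim\xi}\bigl[\sup_{\pi|\varphi}\Pr(\pi|\varphi\models\varphi) - \Pr(\pi_{\textit{RL}}^*|\varphi\models\varphi)\bigr]$.

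Next, I would invoke Voloshin et al.'s Theorem 4.2 pointwise on each product MDP $\mathcal M^\varphi$ with eventual discounting: for every fixed $\varphi$,
\begin{equation*}
    \sup_{\pi|\varphi}\Pr(\pi|\varphi\models\varphi) - \Pr(\pi_{\textit{RL}}^*|\varphi\models\varphi) \;\leq\; 2\log(\tfrac{1}{\gamma})\, O_{\pi_{\textit{RL}}^*}(\varphi),
\end{equation*}
where $O_\pi(\varphi) = \E_{\tau\sim\pi|\varphi}\bigl[|\{q\in\tau_q : q\in\mathcal F_{\mathcal B_\varphi}\}|\,\big|\,\tau\not\models\varphi\bigr]$ is the per-specification version of the quantity in the theorem statement. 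Taking expectation over $\varphi\sim\xi$, exchanging expectation and the pointwise inequality, and upper-bounding by the supremum over policies yields
\begin{equation*}
    \E_{\varphi\sim\xi}\bigl[O_{\pi_{\textit{RL}}^*}(\varphi)\bigr] \;\leq\; \sup_\pi \E_{\varphi\sim\xi}\bigl[O_\pi(\varphi)\bigr] \;=\; \sup_\pi O_\pi,
\end{equation*}
which, combined with the previous step, produces exactly the claimed inequality.

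I expect the main obstacle to be formally checking that the single-specification bound of Voloshin et al.\ transfers to our conditioned setting. Specifically, one needs that eventual discounting over the product MDP induces the same optimisation target as in their original result, and that the definition of $O_\pi$ aligns with their notion of expected accepting visits on non-satisfying runs. These are really bookkeeping issues given that our product MDP construction and reward scheme mirror theirs; once they are verified, the remainder is just a linearity-of-expectation argument and the interchange of $\sup$ with expectation justified above.
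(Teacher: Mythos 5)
Your overall strategy---decouple the multi-task problem into per-specification problems, apply Voloshin et al.'s single-task bound pointwise, and re-aggregate---is genuinely different from the paper's proof, which instead re-runs the argument of Voloshin et al.\ at the level of the $\xi$-averaged quantities: it proves a modified version of their Lemma 4.1 in which $O_\pi$ already contains the expectation over $\varphi$, takes a sequence of policies $\pi_i$ approaching the supremum (thereby avoiding any interchange of $\sup$ and $\E_\varphi$), adds and subtracts $(1-\gamma)V^{\pi_i}$ and $(1-\gamma)V^{\pi^*_{\textit{RL}}}$, and uses the optimality of $\pi^*_{\textit{RL}}$ for the \emph{averaged} objective to discard the resulting value-difference term.

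The gap in your route is the final aggregation step. Writing $O_\pi(\varphi) = \E_{\tau\sim\pi|\varphi}\bigl[\,|\{q\in\tau_q: q\in\mathcal F_{\mathcal B_\varphi}\}|\mid\tau\not\models\varphi\,\bigr]$, you assert $\sup_\pi\E_{\varphi\sim\xi}[O_\pi(\varphi)] = \sup_\pi O_\pi$. This is false in general: the theorem's $O_\pi$ is a conditional expectation over the \emph{joint} distribution of $(\varphi,\tau)$ given the event $\tau\not\models\varphi$, so it averages the per-specification quantities $O_\pi(\varphi)$ with weights proportional to $\xi(\varphi)\cdot\Pr_{\tau\sim\pi|\varphi}(\tau\not\models\varphi)$, whereas $\E_{\varphi\sim\xi}[O_\pi(\varphi)]$ uses weights $\xi(\varphi)$. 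Neither weighted average dominates the other (a specification that is almost always satisfied but accrues many accepting visits on its rare failures inflates $\E_\varphi[O_\pi(\varphi)]$ while barely affecting $O_\pi$), so your chain of inequalities establishes a bound with the constant $\sup_\pi\E_\varphi[O_\pi(\varphi)]$, which is not the constant in the statement. Two smaller issues: (i) the single-specification Theorem 4.2 bounds the suboptimality by $2\log(\frac1\gamma)\sup_\pi O_\pi$, not by $2\log(\frac1\gamma)O_{\pi^*_{\textit{RL}}}$ alone---the triangle inequality in its proof produces one error term for the near-optimal comparator policy and one for $\pi^*_{\textit{RL}}$, so both are needed; and (ii) the interchange $\sup_\pi\E_\varphi[\cdot]=\E_\varphi[\sup_{\pi|\varphi}(\cdot)]$ requires a measurable selection of (near-)optimal per-$\varphi$ policies, which the paper sidesteps entirely by never performing the interchange.
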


\begin{proof}
    The proof follows from \citep[Theorem 4.2]{voloshin2023Eventual} by repeated application of the linearity of expectation and triangle inequality. A detailed proof is given in \cref{app:proof}.
\end{proof}

However, while eventual discounting provides desirable theoretical guarantees, it does not take into account any notion of \textit{efficiency} of formula satisfaction, which is an important practical concern. Consider for example the simple formula $\event \mathsf a$. Eventual discounting assigns the same return to all policies that eventually visit $s$ with $\mathsf a\in L(s)$, regardless of the number of steps required to materialise $\mathsf a$. In practice, we often prefer policies that are more efficient (require fewer steps to make progress towards satisfaction), even if their overall satisfaction probability might be slightly reduced. A natural way to formalise this tradeoff is as follows:

\begin{problem}[Efficient LTL satisfaction]\label{prob:efficient}
    Given an unknown MDP $\mathcal M$, a distribution over LTL formulae $\xi$, and LDBAs $\mathcal B_\varphi$ for each $\varphi\in\supp(\xi)$ (the support of $\xi$), find the optimal policy
    \begin{equation}\label{eq:objective}
        \pi^* = \argmax_\pi \E_{\substack{\varphi\sim\xi,\\\tau\sim\pi|\varphi}}\left[
        \sum_{t=0}^\infty \gamma^t \mathbb 1[q_t\in\mathcal F_{\mathcal B_{\varphi}}]
        \right].
    \end{equation}
\end{problem}

Here, we discount \textit{all} time steps, such that more efficient policies receive higher returns. While solutions to \cref{prob:efficient} are not guaranteed to be probability-optimal (as per \cref{eq:max}), they will generally still achieve a high probability of formula satisfaction, while also taking efficiency into account. We consider \cref{prob:efficient} for the rest of this paper, since we believe efficiency to be an important practical concern, but note that our approach is equally applicable to the eventual discounting setting (see \cref{app:deepltl_eventual_discounting}).
There is an interesting connection to \textit{discounted LTL}~\citep{almagor2014Discounting,alur2023Policy} (see \cref{app:discounted_ltl} for details), but we leave a full investigation thereof for future work.

\section{Method}
Solving \cref{prob:efficient} requires us to train a policy conditioned on the current MDP state $s$ and the current state $q$ of an LDBA constructed from a given target specification $\varphi$. Our key insight is that we can extract a useful representation of $q$ directly from the structure of the LDBA, by reasoning about the possible ways of satisfying the given formula from the current LDBA state $q$. This representation is then used to condition the policy, and guide the agent towards satisfying a given specification.


\subsection{Representing LTL specifications as sequences}

\paragraph{Computing accepting cycles.}
 An optimal policy for \cref{prob:efficient} must continuously visit accepting states in $\mathcal B_\varphi$. Since $\mathcal B_\varphi$ is finite, this means that the agent has to reach an \textit{accepting cycle} in the LDBA\@. Intuitively, the possible ways of reaching accepting cycles are an informative representation of the current LDBA state $q$, as they capture how to satisfy the given task. We compute all possible ways of reaching an accepting cycle using an algorithm based on  depth-first search (DFS) that explores all possible paths from $q$ to an accepting state $q_f\in\mathcal F$, and then back to a state already contained in the path (see \cref{app:accepting_cycles} for details). Let $P_q$ denote the resulting set of paths from $q$ to accepting cycles.

\vspace{.05cm}
 \begin{remark*}
     In the case that $\varphi$ corresponds to a task that can be completed in finite time (e.g.\ $\event\mathsf a$), the accepting cycle in $\mathcal B_\varphi$ is trivial and consists of only a single looping state (see e.g.\ $q_1$ in \cref{fig:ldba}). We note that our algorithm to compute accepting cycles is similar to previous work on reward shaping~\citep{shah2024LTLConstrained}; however, we use the accepting cycles for an altogether different purpose.
 \end{remark*}

\paragraph{From paths to sequences.}
A path $p \in P_q$ is an infinite sequence of states $(q_1, q_2, \ldots)$ in the LDBA\@. Let $A_i^+ = \{a : \delta(q_i, a) = q_{i+1}\}$ denote the set of assignments $a\in 2^{AP}$ that progress the LDBA from state $q_i$ to $q_{i+1}$.\footnote{For now, we assume that there are no $\varepsilon$-transitions in $p$. We revisit $\varepsilon$-transitions in \cref{sec:epsilon}.} We furthermore define the set of negative assignments $A_i^- = \{a : a\not\in A_i^+\land \delta(q_i, a)\neq q_i\}$  that lead from $q_i$ to a different state in the LDBA (excluding self-loops). In order to satisfy the LTL specification via $p$, the policy has to sequentially visit MDP states $s_t$ such that $L(s_{t_i}) \in A_i^+$ for some $t_i$, while avoiding assignments in $A_i^-$. We refer to the sequence
\begin{equation*}
    \sigma_p = \left( (A_1^+, A_1^-), (A_2^+, A_2^-), \ldots \right)
\end{equation*}
as the \textit{reach-avoid sequence} corresponding to $p$, and denote as $\zeta_q = \{\sigma_p : p\in P_q\}$ the set of all reach-avoid sequences for $q$.

\begin{example}
    The first two steps of $\sigma = \bigl( (\{\{\mathsf{a}\}\}, \{\{\mathsf{b,d}\}\}), (\{\{\mathsf{c}\},\{\mathsf{e}\}\},\emptyset), \ldots  \bigr)$ require the agent to achieve $\mathsf{a}$ while avoiding states with both $\mathsf{b}$ and $\mathsf{d}$, and subsequently achieve $\mathsf{c}$ or $\mathsf{e}$. \qed 
\end{example}

\begin{figure}
    \centering
    \includegraphics[height=2.6cm]{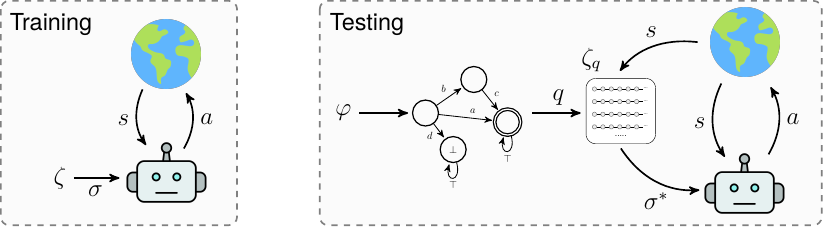}
    \caption{Overview of our approach. (\textit{Left}) During training, we train a general sequence-conditioned policy with reach-avoid sequences $\sigma$. (\textit{Right}) At test time, we construct an LDBA from the target specification $\varphi$. We then select the optimal reach-avoid sequence $\sigma^*$ for the current LDBA state $q$ according to the value function $V^\pi(s,\sigma)$, and produce an action $a$ from the policy conditioned on $\sigma^*$.
    }
    \label{fig:overview}
\end{figure}

\subsection{Overview of the approach}\label{sec:overview}
See \cref{fig:overview} for an overview of our method. 
Representing the current LDBA state $q$ as a set of reach-avoid sequences allows us to condition the policy on possible ways of achieving the given specification. On a high level, our approach works as follows: in the \textit{training stage}, we learn a general sequence-conditioned policy $\pi\colon \mathcal S\times\zeta\to\Delta(\mathcal A)$ together with its value function $V^\pi\colon \mathcal S\times\zeta\to\mathbb R$ to satisfy arbitrary reach-avoid sequences $\sigma\in\zeta$ over $AP$, following the standard framework of \textit{goal-conditioned} RL~\citep{liu2022GoalConditioned}.
Note that we do not assume access to a distribution $\xi$ over formulae, since we are interested in satisfying arbitrary specifications. At \textit{test time}, we are now given a target specification $\varphi$ and construct its corresponding LDBA. We keep track of the current LDBA state $q$, and select the optimal reach-avoid sequence to follow in order to satisfy $\varphi$ according to the value function of $\pi$, i.e.\
\begin{equation}\label{eq:sigmastar}
    \sigma^* = \argmax_{\sigma\in\zeta_q} V^\pi(s,\sigma).
\end{equation}
We then execute actions $a\sim\pi(\cdot,\sigma^*)$ until the next LDBA state is reached. 

The test-time execution of our approach can be equivalently thought of as executing a policy $\widetilde\pi$ in the product MDP $\mathcal M^\varphi$, where $\widetilde\pi(s,q) = \pi\left(s, \sigma^*\right)$. That is, $\widetilde\pi$ exploits $\pi$ to reach an accepting cycle in the LDBA of the target specification, and thus approximates \cref{prob:efficient}. Next, we describe the model architecture of the sequence-conditioned policy, and give a detailed description of the training procedure and test-time execution. 


\subsection{Model architecture}
We parameterise the sequence-conditioned policy $\pi$ using a modular neural network architecture. This consists of an \textit{observation module}, which processes observations from the environment, a \textit{sequence module}, which encodes the reach-avoid sequence, and an \textit{actor module}, which takes as input the features produced by the previous two modules and outputs a distribution over actions.

The \emph{observation module} is implemented as either a fully-connected (for generic state features) or convolutional neural network (for image-like observations). The \emph{actor module} is another fully connected neural network that outputs the mean and standard deviation of a Gaussian distribution (for continuous action spaces) or the parameters of a categorical distribution (in the discrete setting). Finally, the \emph{sequence module} consists of a permutation-invariant neural network that encodes sets of assignments, and a recurrent neural network (RNN) that maps the resulting sequence to a final representation. We discuss these components in further detail below and provide an illustration of the sequence module in \cref{fig:seq}.

\begin{figure}
    \centering
    \includegraphics[scale=.95]{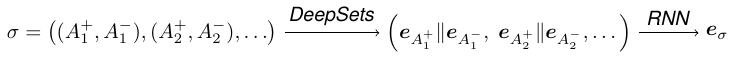}
    \caption{Illustration of the \emph{sequence module}. The positive and negative assignments in the truncated reach-avoid sequence $\sigma$ are encoded using the \textit{DeepSets} architecture, which produces embeddings~$\bm e_A$. These are then concatenated and passed through an RNN, which yields the final sequence representation $\bm e_\sigma$.}
    \label{fig:seq}
\end{figure}

\paragraph{Representing sets of assignments.}
The first step of the sequence module consists in encoding the sets of assignments in a reach-avoid sequence. We employ the \textit{DeepSets} architecture~\citep{zaheer2017Deep} to obtain an encoding $\bm e_A$ of a set of assignments $A$. That is, we have 
\begin{equation}\label{eq:deepset}
\bm e_A = \rho\left(\sum_{a\in A}\phi(a) \right),
\end{equation}
where $\phi(a)$ is a learned embedding function, and $\rho$ is a learned non-linear transformation. Note that the resulting encoding $\bm e_A$ is \textit{permutation-invariant}, i.e.\ it does not depend on the order in which the elements in $A$ are processed, and \cref{eq:deepset} is thus a well-defined function on sets.

\paragraph{Representing reach-avoid sequences.}
Once we have obtained encodings of the sets $A_i^+$ and $A_i^-$ for each element in the reach-avoid sequence $\sigma$, we concatenate these embeddings and pass them through an RNN to yield the final representation of the sequence. Since $\sigma$ is an infinite sequence, we approximate it with a finite prefix by repeating its looping part $k$ times, such that the truncated sequence visits an accepting state exactly $k$ times. We apply the RNN backwards, that is, from the end of the truncated sequence to the beginning, since earlier elements in $\sigma$ are more important for the immediate actions of the policy. The particular model of RNN we employ is a \textit{gated recurrent unit} (GRU)~\citep{cho2014Properties}.


\subsection{Training procedure}
We train the policy $\pi$ and the value function $V^\pi$ using the general framework of \textit{goal-conditioned RL}~\citep{liu2022GoalConditioned}. That is, we generate a random reach-avoid sequence at the beginning of each training episode and reward the agent for successfully completing it. In particular, given a training sequence $\sigma = \left((A_1^+, A_1^-), \ldots, (A_{n}^+, A_{n}^-)\right)$, we keep track of the task satisfaction progress via an index $i\in[n]$ (where initially $i = 1$). We say the agent \textit{satisfies} a set of assignments $A$ at time step $t$ if $L(s_t)\in A$. Whenever the agent satisfies $A_i^+$, we increment $i$ by one. If $i = n + 1$, we give the agent a reward of $1$ and terminate the episode. If the agent at any point satisfies $A_i^-$, we also terminate the episode and give it a negative reward of $-1$. Otherwise, the agent receives zero reward. By maximising the expected discounted return, the policy learns to efficiently satisfy arbitrary reach-avoid sequences. In our experiments, we use \textit{proximal policy optimisation} (PPO)~\citep{schulman2017Proximal} to optimise the policy, but our approach can be combined with any RL algorithm.

\paragraph{Curriculum learning.}
To improve the training of $\pi$ in practice, we employ a simple form of \textit{curriculum learning}~\citep{narvekar2020Curriculum} in order to gradually expose the policy to more challenging tasks. A curriculum consists of multiple stages that correspond to training sequences of increasing length and complexity. For example, the first stage generally consists only of simple reach-tasks of the form $\sigma = \bigl((\{\{p\}\},\emptyset)\bigr)$ for $p\in AP$, while later stages involve longer sequences with avoidance conditions. Once the policy achieves satisfactory performance on the current tasks, we move on to the next stage. For details on the exact curricula we use in our experiments, see \cref{app:curricula}.


\subsection{Test time policy execution}
At test time, we execute the trained sequence-conditioned policy $\pi$ to complete an arbitrary task $\varphi$. As described in \cref{sec:overview}, we keep track of the current LDBA state $q$ in $\mathcal B_\varphi$, and use the learned value function $V^\pi$ to select the optimal reach-avoid sequence $\sigma^*$ to follow from $q$ in order to satisfy $\varphi$ (\cref{eq:sigmastar}). Note that it follows from the reward of our training procedure that
\begin{equation*}
    V^{\pi}(s, \sigma) \leq \E_{\tau\sim\pi|\sigma}\left[\sum_{t=0}^\infty \gamma^t \mathbb 1[i = n+1] \,\bigg|\, s_0 = s\right],
\end{equation*}
i.e.\ the value function is a lower bound of the discounted probability of reaching an accepting state $k$ times via $\sigma$ (where $k$ is the number of loops in the truncated sequence). As $k\to\infty$, the sequence $\sigma^*$ that maximises $V^\pi$ thus maximises a lower bound on \cref{prob:efficient} for the trained policy $\pi$. Once $\sigma^*$ has been selected, we execute actions $a\sim\pi(\cdot,\sigma^*)$ until the next LDBA state is reached.

\paragraph{Strict negative assignments.}
Recall that a negative assignment in a reach-avoid sequence $\sigma_p$ is any assignment that leads to an LDBA state other than the desired next state in $p$. In practice, we find that trying to avoid all other states in the LDBA can be too restrictive for the policy. We therefore only regard as negative those assignments that lead to a significant reduction in expected performance. In particular, given a threshold $\lambda$, we define the set of \textit{strict} negative assignments for state $q_i\in p$ as the assignments that lead to a state $q'$ where
\begin{equation*}
    V^\pi(s, \sigma_p[i\ldots]) - \max_{\sigma'\in\zeta_{q'}} V^\pi(s, \sigma') \geq \lambda.
\end{equation*}
We then set $A_i^-$ to be the set of \textit{strict} negative assignments for $q_i$. Reducing $\lambda$ leads to a policy that more closely follows the selected path $p$, whereas increasing $\lambda$ gives the policy more flexibility to deviate from the chosen path.



\paragraph{Handling $\varepsilon$-transitions.}
\label{sec:epsilon}
We now discuss how to handle $\varepsilon$-transitions in the LDBA\@. As described in \cref{sec:ldba}, whenever the LDBA is in a state $q$ with an $\varepsilon$-transition to $q'$, the policy can choose to either stay in $q$ or transition to $q'$ without acting in the MDP\@. If the sequence $\sigma^*$ chosen at $q$ starts with an $\varepsilon$-transition (i.e.\ $A_1^+ = \{\varepsilon\}$), we extend the action space of $\pi$ to include the action $\varepsilon$. If $\mathcal A$ is discrete, we simply add an additional dimension to the action space. In the continuous case, we learn the probability $p$ of taking the $\varepsilon$-action and model $\pi(\cdot|s, \sigma^*)$ as a mixed continuous/discrete probability distribution (see e.g.~\cite[Ch.~3.6]{shynk2012Probability}). Whenever the policy executes the $\varepsilon$-action, we update the current LDBA state to the next state in the selected path. In practice, we additionally only allow $\varepsilon$-actions if $L(s)\not\in A_2^-$, since in that case taking the $\varepsilon$-transition would immediately lead to failure.

\subsection{Discussion}
We argue that our approach has several advantages over existing methods. Since we operate on accepting cycles of B\"uchi automata, our method is applicable to infinite-horizon (i.e.\ $\omega$-regular) tasks, contrary to most existing approaches. Our method is the \textit{first} approach that is also non-myopic, as it is able to reason about the entire structure of a specification via temporally extended reach-avoid sequences. This reasoning naturally considers safety constraints, which are represented through negative assignments and inform the policy about which propositions to avoid. Crucially, these safety constraints are considered during planning, i.e.\ when selecting the optimal sequence to execute, rather than only during execution. For a detailed comparison of our approach to related work, see \cref{sec:related}.

\section{Experiments}
We evaluate our approach, called \textit{DeepLTL}, in a variety of environments and on a range of LTL specifications of varying difficulty. We aim to answer the following questions: \textbf{(1)} Is DeepLTL able to learn policies that can zero-shot satisfy complex LTL specifications? \textbf{(2)} How does our method compare to relevant baselines in terms of both satisfaction probability and efficiency? \textbf{(3)} Can our approach successfully handle infinite-horizon specifications?

\subsection{Experimental setup}\label{sec:setup}
\paragraph{Environments.}
Our experiments involve different domains with varying state and action spaces. This includes the \textit{LetterWorld} environment~\citep{vaezipoor2021LTL2Action}, a $7\times 7$ discrete grid world in which letters corresponding to atomic propositions occupy randomly sampled positions in the grid. We also consider the high-dimensional \textit{ZoneEnv} environment from \citet{vaezipoor2021LTL2Action}, in which a robotic agent with a continuous action space has to navigate between different randomly placed coloured regions, which correspond to the atomic propositions. Finally, we evaluate our approach on the continuous \textit{FlatWorld} environment~\citep{voloshin2023Eventual}, in which multiple propositions can hold true at the same time. We provide further details and visualisations in \cref{app:environments}.

\begin{figure}
    \centering
    \begin{subfigure}{0.28\textwidth}
        \centering
        \includegraphics[width=\textwidth]{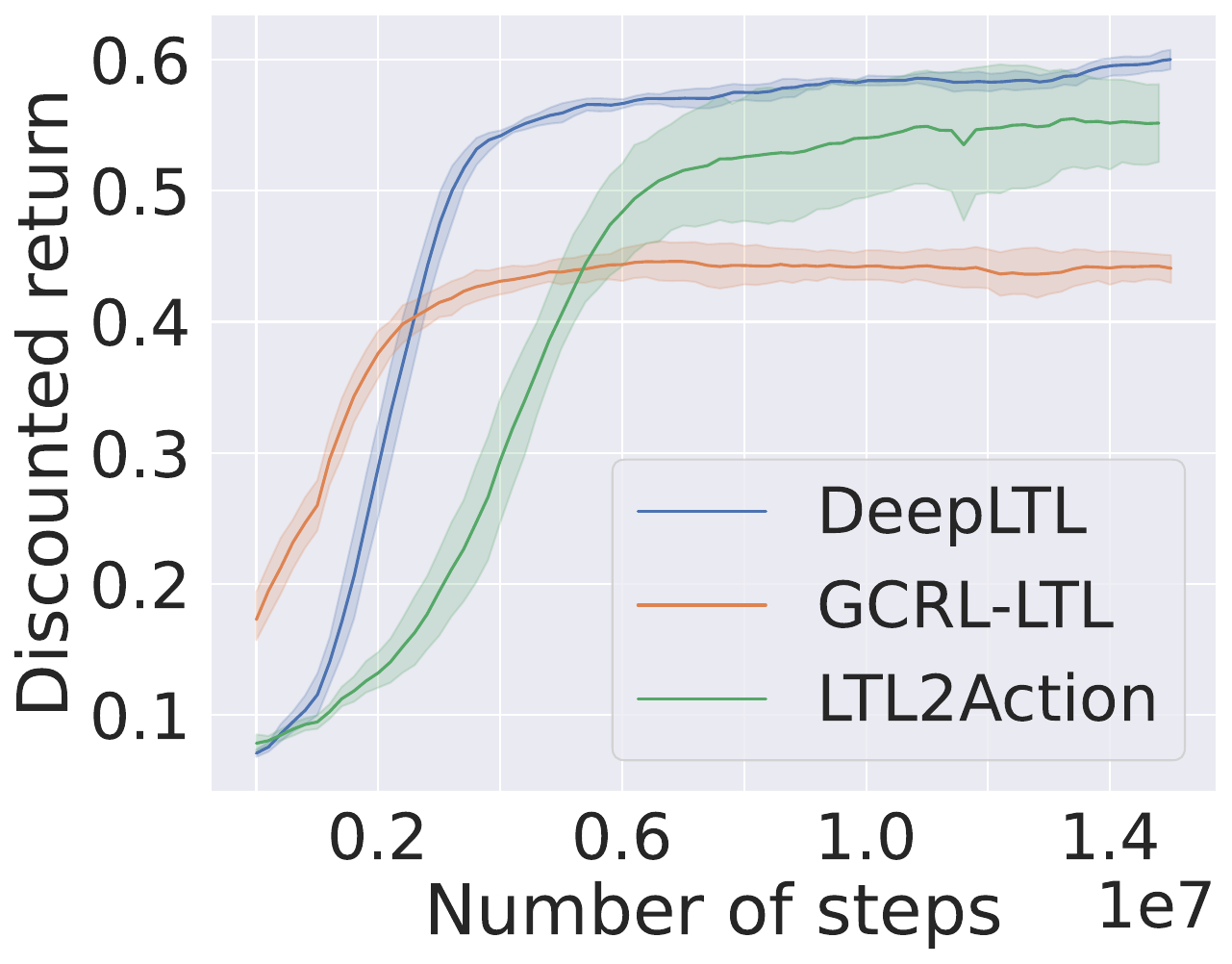}
        \caption{LetterWorld}
        \label{fig:letters}
    \end{subfigure}
    \hspace{.2cm}
    \begin{subfigure}{0.28\textwidth}
        \centering
        \includegraphics[width=\textwidth]{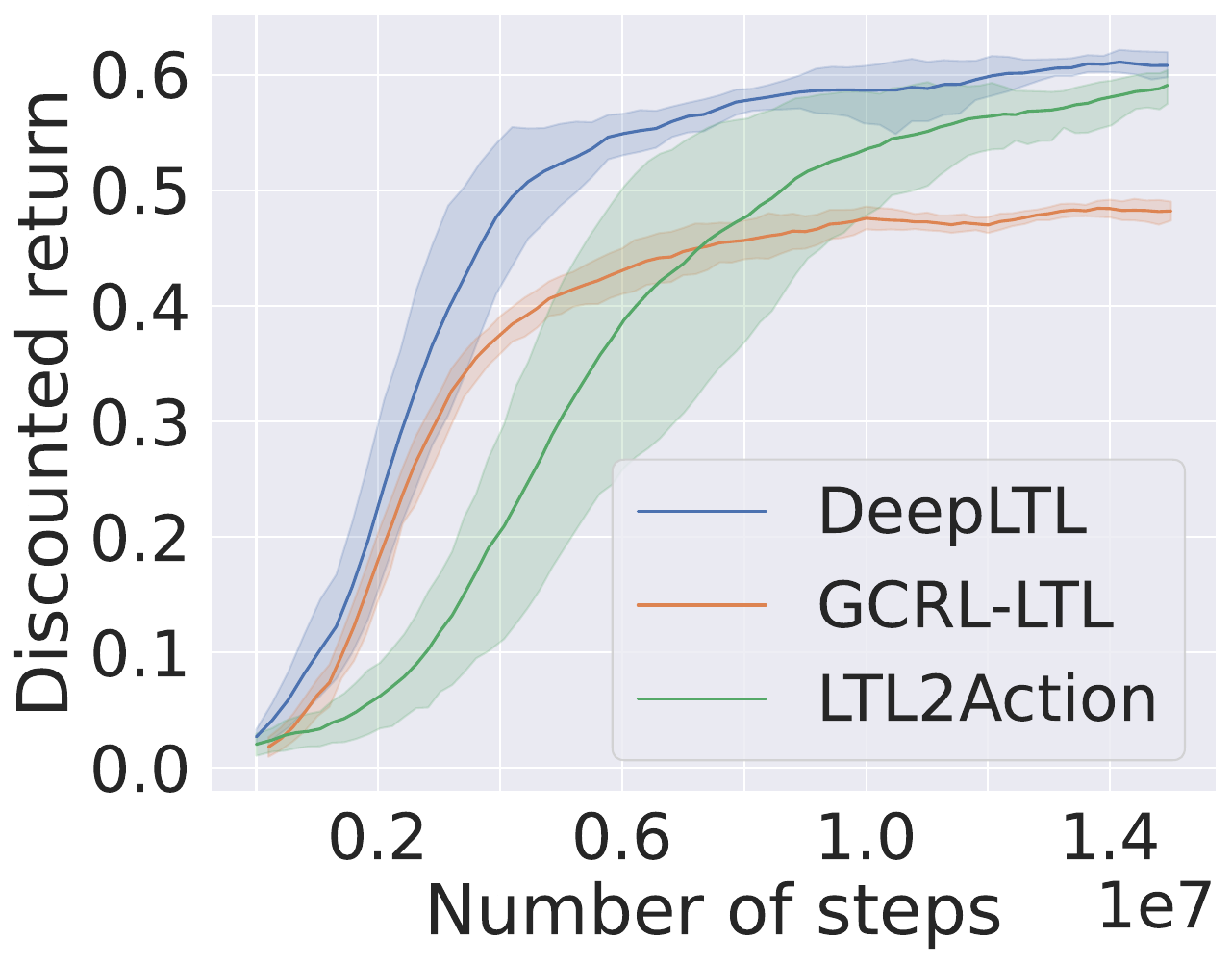}
        \caption{ZoneEnv}
        \label{fig:zones}
    \end{subfigure}
    \hspace{.2cm}
    \begin{subfigure}{0.28\textwidth}
        \centering
        \includegraphics[width=\textwidth]{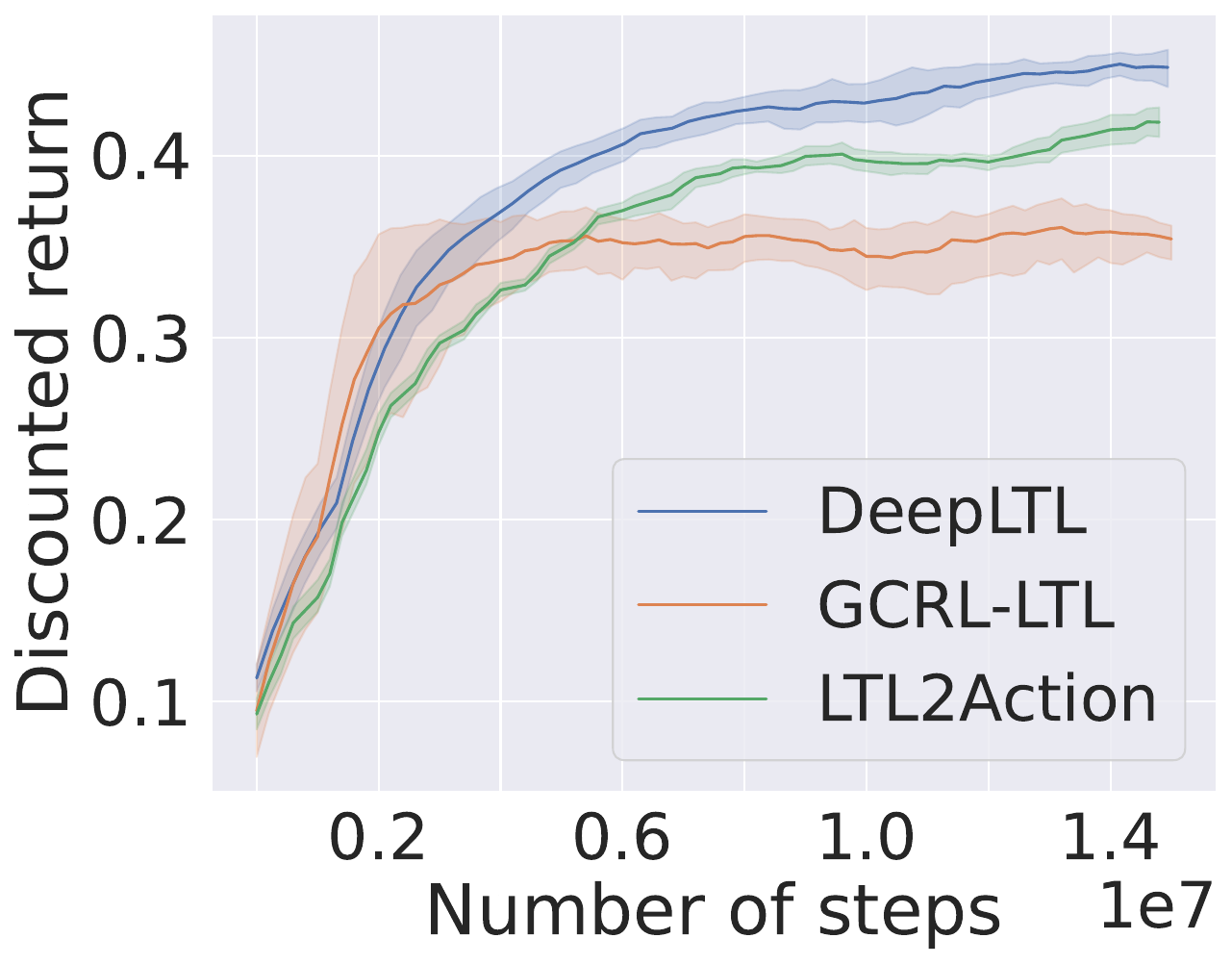}
        \caption{FlatWorld}
        \label{fig:flatworld}
    \end{subfigure}
    \caption{Evaluation curves on \textit{reach/avoid} specifications. Each datapoint is collected by averaging the discounted return of the policy across 50 episodes with randomly sampled tasks, and shaded areas indicate 90\% confidence intervals over 5 different random seeds.}
    \label{fig:training_results}
\end{figure}

\paragraph{LTL specifications.}
We consider a range of tasks of varying complexity. \textit{Reach/avoid} specifications are randomly sampled from a task space that encompasses both sequential reachability objectives of the form $\event (p_1 \land (\event p_2 \land (\event p_3))$ and reach-avoid tasks $\neg p_1\until (p_2\land (\neg p_3 \until p_4))$, where the $p_i$ are randomly sampled atomic propositions. \textit{Complex} specifications are given by more complicated, environment-specific LTL formulae, such as the specification $((\mathsf{a} \lor \mathsf{b} \lor \mathsf{c} \lor \mathsf{d}) \Rightarrow \event (\mathsf{e} \land (\event (\mathsf{f} \land \event \mathsf{g})))) \until (\mathsf{h} \land \event \mathsf{i})$ in \textit{LetterWorld}. We also separately investigate \textit{infinite-horizon} tasks such as $\always\event \mathsf a \land \always\event\mathsf b$ and $\event\always \mathsf a$. The specifications we consider cover a wide range of LTL objectives, including reachability, safety, recurrence, persistence, and combinations thereof. Details on the exact specifications we use in each environment are given in \cref{app:specifications}.

\paragraph{Baselines.} We compare DeepLTL to two state-of-the-art approaches for learning general LTL-satisfying policies. LTL2Action~\citep{vaezipoor2021LTL2Action} encodes the syntax tree of a target formula via a graph neural network (GNN) and uses a procedure known as \textit{LTL progression} to progress through the specification based on the observed propositions. The second baseline, GCRL-LTL~\citep{qiu2023Instructing}, instead learns proposition-conditioned policies and combines them compositionally using a weighted graph search on the B\"uchi automaton of a target specification.

\paragraph{Evaluation protocol.}
In line with previous work, the methods are trained for 15M interaction steps on each environment with PPO~\citep{schulman2017Proximal}. Details about hyperparameters and neural network architectures can be found in \cref{app:hyperparameters}. We report the performance in terms of discounted return over the number of environment interactions (following \cref{eq:objective}) on randomly sampled \emph{reach/avoid} tasks, and provide tabular results detailing the success rate (SR) and average number of steps until completion ($\mu$) of trained policies on \emph{complex} tasks.
All results are averaged across 5 different random seeds. Furthermore, we provide visualisations of trajectories of trained policies for various specifications in the \textit{ZoneEnv} and \textit{FlatWorld} environments in \cref{app:visualisations}.

\subsection{Results}

\begin{table}[]
    \centering
    \caption{Evaluation results of trained policies on \textit{complex} finite-horizon specifications. We report the \textit{success rate} (SR) and average number of steps to satisfy the task ($\mu$). Results are averaged over 5 seeds and 500 episodes per seed. ``$\pm$'' indicates the standard deviation over seeds.}
    \label{tab:finite}
    \resizebox{.85\textwidth}{!}{
    \begin{tabular}{@{}llrrrrrrrr@{}}
        \toprule
        &  & \multicolumn{2}{c}{LTL2Action} &  & \multicolumn{2}{c}{GCRL-LTL} &  & \multicolumn{2}{c}{DeepLTL}  \\ \midrule
        &  & \multicolumn{1}{c}{SR ($\uparrow$)}            & \multicolumn{1}{c}{$\mu$ ($\downarrow$)}          &  & \multicolumn{1}{c}{SR ($\uparrow$)}           & \multicolumn{1}{c}{$\mu$ ($\downarrow$)}         &  & \multicolumn{1}{c}{SR ($\uparrow$)}            & \multicolumn{1}{c}{$\mu$ ($\downarrow$)}        \\ \midrule
\multirow{5}{*}{\rotatebox{90}{\textnormal{LetterWorld}}}& $\varphi_{1}$ & 0.75$_{\pm0.18 }$ & 29.48$_{\pm3.20 }$ & & 0.94$_{\pm0.05 }$ & 15.29$_{\pm0.70 }$ & & \textbf{1.00}$_{\pm0.00 }$ & \textbf{9.66}$_{\pm0.35 }$ \\
& $\varphi_{2}$ & 0.79$_{\pm0.10 }$ & 19.04$_{\pm6.79 }$ & & 0.94$_{\pm0.03 }$ & 9.77$_{\pm1.16 }$ & & \textbf{0.98}$_{\pm0.00 }$ & \textbf{7.26}$_{\pm0.35 }$ \\
& $\varphi_{3}$ & 0.41$_{\pm0.14 }$ & 40.31$_{\pm2.88 }$ & & \textbf{1.00}$_{\pm0.00 }$ & 20.72$_{\pm1.34 }$ & & \textbf{1.00}$_{\pm0.00 }$ & \textbf{12.23}$_{\pm0.58 }$ \\
& $\varphi_{4}$ & 0.72$_{\pm0.17 }$ & 28.83$_{\pm4.47 }$ & & 0.82$_{\pm0.07 }$ & 14.60$_{\pm1.63 }$ & & \textbf{0.97}$_{\pm0.01 }$ & \textbf{12.13}$_{\pm0.58 }$ \\
& $\varphi_{5}$ & 0.44$_{\pm0.26 }$ & 31.84$_{\pm9.06 }$ & & \textbf{1.00}$_{\pm0.00 }$ & 25.63$_{\pm0.55 }$ & & \textbf{1.00}$_{\pm0.00 }$ & \textbf{9.48}$_{\pm0.78 }$ \\
\midrule
\multirow{6}{*}{\rotatebox{90}{\textnormal{ZoneEnv}}}& $\varphi_{6}$ & 0.60$_{\pm0.20 }$ & 424.07$_{\pm14.95 }$ & & 0.85$_{\pm0.03 }$ & 452.19$_{\pm15.59 }$ & & \textbf{0.92}$_{\pm0.06 }$ & \textbf{303.38}$_{\pm19.43 }$ \\
& $\varphi_{7}$ & 0.14$_{\pm0.18 }$ & 416.78$_{\pm66.38 }$ & & 0.85$_{\pm0.05 }$ & 451.18$_{\pm04.91 }$ & & \textbf{0.91}$_{\pm0.03 }$ & \textbf{299.95}$_{\pm09.47 }$ \\
& $\varphi_{8}$ & 0.67$_{\pm0.26 }$ & 414.48$_{\pm68.52 }$ & & 0.89$_{\pm0.04 }$ & 449.70$_{\pm16.82 }$ & & \textbf{0.96}$_{\pm0.04 }$ & \textbf{259.75}$_{\pm08.07 }$ \\
& $\varphi_{9}$ & 0.69$_{\pm0.22 }$ & 331.55$_{\pm41.40 }$ & & 0.87$_{\pm0.02 }$ & 303.13$_{\pm05.83 }$ & & \textbf{0.90}$_{\pm0.03 }$ & \textbf{203.36}$_{\pm14.97 }$ \\
& $\varphi_{10}$ & 0.66$_{\pm0.19 }$ & 293.22$_{\pm63.94 }$ & & 0.85$_{\pm0.02 }$ & 290.73$_{\pm17.39 }$ & & \textbf{0.91}$_{\pm0.02 }$ & \textbf{187.13}$_{\pm10.61 }$ \\
& $\varphi_{11}$ & 0.93$_{\pm0.07 }$ & 123.89$_{\pm07.30 }$ & & 0.89$_{\pm0.01 }$ & 137.42$_{\pm08.30 }$ & & \textbf{0.98}$_{\pm0.01 }$ & \textbf{106.21}$_{\pm07.88 }$ \\
\midrule
\multirow{5}{*}{\rotatebox{90}{\textnormal{FlatWorld}}}& $\varphi_{12}$ & \textbf{1.00}$_{\pm0.00 }$ & 83.32$_{\pm01.57 }$ & & 0.82$_{\pm0.41 }$ & \textbf{78.21}$_{\pm08.98 }$ & & \textbf{1.00}$_{\pm0.00 }$ & 79.69$_{\pm02.50 }$ \\
& $\varphi_{13}$ & 0.63$_{\pm0.50 }$ & 94.43$_{\pm39.30 }$ & & 0.00$_{\pm0.00 }$ & 0.00$_{\pm00.00 }$ & & \textbf{1.00}$_{\pm0.00 }$ & \textbf{52.82}$_{\pm03.09 }$ \\
& $\varphi_{14}$ & 0.71$_{\pm0.40 }$ & 96.16$_{\pm28.93 }$ & & 0.73$_{\pm0.41 }$ & 74.60$_{\pm01.86 }$ & & \textbf{0.98}$_{\pm0.01 }$ & \textbf{71.76}$_{\pm02.87 }$ \\
& $\varphi_{15}$ & 0.07$_{\pm0.02 }$ & \textbf{32.37}$_{\pm01.63 }$ & & 0.73$_{\pm0.03 }$ & 41.30$_{\pm01.24 }$ & & \textbf{0.86}$_{\pm0.01 }$ & 43.87$_{\pm01.45 }$ \\
& $\varphi_{16}$ & 0.56$_{\pm0.35 }$ & 48.85$_{\pm32.85 }$ & & 0.64$_{\pm0.08 }$ & \textbf{17.76}$_{\pm01.63 }$ & & \textbf{1.00}$_{\pm0.01 }$ & 37.04$_{\pm05.28 }$ \\
\bottomrule
                \end{tabular}
}
\end{table}

\paragraph{Finite-horizon tasks.}
\cref{fig:training_results} shows the discounted return achieved on \textit{reach/avoid} tasks across environment interactions. DeepLTL clearly outperforms the baselines, both in terms of sample efficiency and final performance. The results in \cref{tab:finite} further demonstrate that our method can efficiently zero-shot satisfy \emph{complex} specifications (see \cref{app:specifications} for details on the tasks), achieving higher success rates (SR) than existing approaches, while requiring significantly fewer steps ($\mu$).
These results highlight the performance benefits of our representation based on reach-avoid sequences over existing encoding schemes, and show that our approach learns much more efficient policies than the myopic baseline GCRL-LTL\@. The higher success rates of our method furthermore indicate that it handles safety constraints better than the baselines.

\paragraph{Infinite-horizon tasks.}
\cref{fig:infinite_results} shows example trajectories of DeepLTL on infinite-horizon tasks in the \textit{ZoneEnv} environment. In \cref{tab:infinite} we furthermore compare the performance of our approach on recurrence, i.e.\ $\always\event$, tasks (see \cref{app:specifications} for details) to GCRL-LTL, the only previous approach that can handle infinite-horizon specifications. We report the average number of visits to accepting states per episode, which corresponds to the number of completed cycles of target propositions (e.g.\ the number of times both $\mathsf{blue}$ and $\mathsf{green}$ have been visited for the specification $\always\allowbreak\event\allowbreak\mathsf{blue}\allowbreak\land\allowbreak\always\allowbreak\event\allowbreak\mathsf{green}$). Additional results on $\event\always$ tasks can be found in \cref{app:event_always}. Our evaluation confirms that DeepLTL can successfully handle $\omega$-regular tasks, and significantly outperforms the only relevant baseline.

\paragraph{Further experimental results.}
We provide further experimental results in \cref{app:further_results}. These include more results on tasks with safety requirements (\labelcref{exp:adversarial}), an investigation of the generalisation capabilities of DeepLTL to longer sequences (\labelcref{exp:generalisation}), two ablation studies on the impact of curriclum learning (\labelcref{exp:curriculum}) and our model architecture (\labelcref{exp:transformer}), investigations on the performance impact of the number of atomic propositions (\labelcref{exp:atomic}) and specification complexity (\labelcref{exp:complexity}), and a comparison to \textit{RAD-embeddings}~\citep{yalcinkaya2024Compositional}, a recent automata-conditioned RL method (\labelcref{exp:rad}). Finally, we show trajectories of trained policies in the \textit{ZoneEnv} and \textit{FlatWorld} environments in \cref{app:visualisations}.

\begin{figure}
    \centering
    \begin{subfigure}{0.44\textwidth}
        \centering
        \begin{subfigure}{0.48\textwidth}
            \centering
            \includegraphics[width=\textwidth]{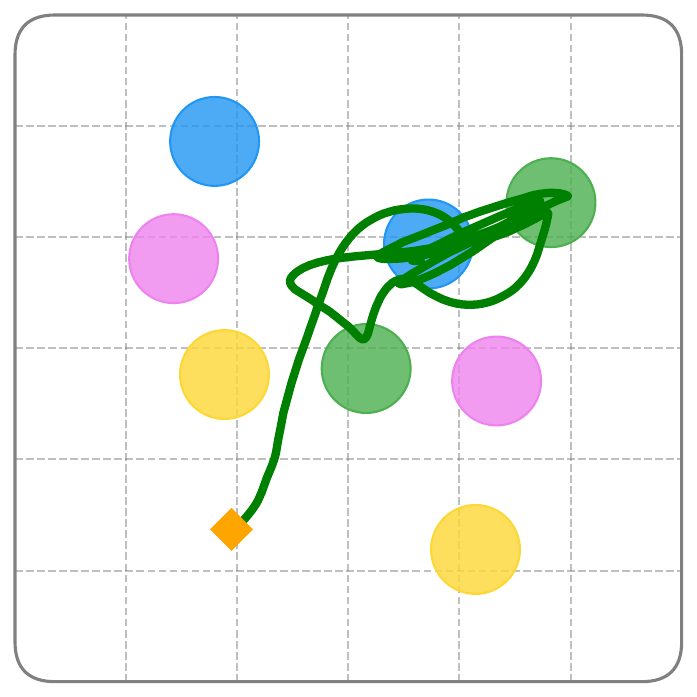}
            \caption{$\always\event\mathsf{blue}\land\always\event\mathsf{green}$}
        \end{subfigure}
        \begin{subfigure}{0.48\textwidth}
            \centering
            \includegraphics[width=\textwidth]{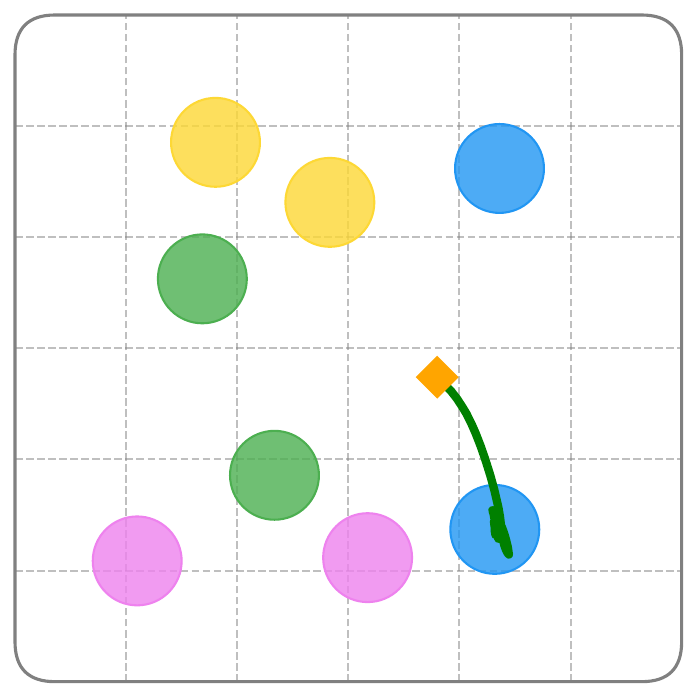}
            \caption{$\event\always\mathsf{blue}$}
        \end{subfigure}
    \end{subfigure}
    \hspace{.5cm}
    \begin{subfigure}{0.4\textwidth}
        \centering
        \adjustbox{valign=b}{
        \resizebox{\textwidth}{!}{
        \begin{tabular}{@{}llrr@{}}
        \toprule
        &  & GCRL-LTL & DeepLTL  \\ \midrule
        
    \multirow{2}{*}{LetterWorld}& $\psi_{1}$ & 6.74$_{\pm1.96 }$ & \textbf{17.84}$_{\pm0.67 }$  \\
    & $\psi_{2}$  & 1.49$_{\pm0.31 }$ & \textbf{3.05}$_{\pm0.47 }$\\
    \midrule
    \multirow{2}{*}{ZoneEnv}& $\psi_{3}$ & 3.14$_{\pm0.18 }$ & \textbf{4.16}$_{\pm0.61 }$ \\
    & $\psi_{4}$ & 1.31$_{\pm0.05 }$ & \textbf{1.71}$_{\pm0.23 }$ \\\midrule
    \multirow{2}{*}{FlatWorld}& $\psi_{5}$ & 7.20$_{\pm4.08 }$ & \textbf{12.12}$_{\pm0.80 }$ \\
    & $\psi_{6}$ & 4.64$_{\pm0.25 }$ & \textbf{6.98}$_{\pm1.00 }$ \\ 
    \bottomrule
                    \end{tabular}
                    }}
    \caption{}
    \label{tab:infinite}
    \end{subfigure}
    \caption{Results on infinite-horizon tasks. (a), (b) Example trajectories for infinite-horizon specifications. (c) Performance on various recurrence tasks. We report the average number of visits to accepting states over 500 episodes (i.e.\ completed cycles), with standard deviations over 5 seeds.}
    \label{fig:infinite_results}
\end{figure}

    

    

\section{Related work}\label{sec:related}

RL with tasks expressed in LTL has received significant attention in the last few years~\citep{sadigh2014learning,degiacomo2018Reinforcement,camacho2019LTL,kazemi2022Translating,li2024Reward}. Our approach builds on previous works that use LDBAs to augment the state space of the MDP~\citep{hasanbeig2018LogicallyConstrained,hahn2019OmegaRegular,hahn2020Faithful,bozkurt2020Control,voloshin2022Policy,  hasanbeig2023Certified,bagatella2024Directed,shah2024LTLConstrained}. However, these methods are limited to finding policies for a \textit{single}, fixed specification. In contrast, our approach is realised in a multi-task setting and learns a policy that can zero-shot generalise to arbitrary specifications at test time.

Among the works that consider multiple, previously unseen specifications, many approaches decompose a given task into subtasks, which are then individually completed~\citep{araki2021Logical,leon2021Systematic,leon2022Nutshell,liu2024Skill}. However, as noted by \citet{vaezipoor2021LTL2Action} this results in \textit{myopic} behaviour and hence potentially suboptimal solutions. In contrast, our approach takes the entire specification into account by reasoning over temporally extended reach-avoid sequences. \citet{kuo2020Encoding} instead propose to compose RNNs in a way that mirrors formula structure, which however requires learning a non-stationary policy. This is addressed by LTL2Action~\citep{vaezipoor2021LTL2Action}, which encodes the syntax tree of a target specification using a GNN and uses \textit{LTL progression}~\citep{bacchus2000Using} to make the problem Markovian. A similar approach is proposed by \citet{yalcinkaya2024Compositional}, who use a GNN to encode tasks represented by deterministic finite automata. We instead extract reach-avoid sequences from B\"uchi automata, which explicitly lists the possible ways of satisfying a given specification. Furthermore, due to its reliance on LTL progression, LTL2Action is restricted to the finite-horizon fragment of LTL, whereas our approach is able to handle infinite-horizon tasks.

The only previous method we are aware of that can deal with infinite-horizon specifications is GCRL-LTL~\citep{qiu2023Instructing}. However, similar to other approaches, GCRL-LTL relies on composing policies for sub-tasks and therefore produces suboptimal behaviour. Furthermore, the approach only considers safety constraints during task execution and not during high-level planning. Recently, \citet{xu2024Generalization} proposed \textit{future dependent options} for satisfying arbitrary LTL tasks, which are option policies that depend on future goals. Their method is only applicable to a fragment of LTL that does not support conjunction nor infinite-horizon specifications, and does not consider safety constraints during planning. See \cref{app:extended_related_work} for an extended discussion of related work.


    

\section{Conclusion}
We have introduced \textit{DeepLTL}, a novel approach to the problem of learning policies that can zero-shot execute arbitrary LTL specifications. Our method represents a given specification as a set of reach-avoid sequences of truth assignments, and exploits a general sequence-conditioned policy to execute arbitrary LTL instructions at test time. In contrast to existing techniques, our method can handle infinite-horizon specifications, is non-myopic, and naturally considers safety constraints. Through extensive experiments, we have demonstrated the effectiveness of our approach in practice. This work contributes to the field of LTL-conditioned RL, which has the potential to pave the way towards general AI systems capable of executing arbitrary, well-defined tasks.

\newpage

\subsubsection*{Acknowledgements}
We are grateful for the valuable feedback from the anonymous reviewers. This work was supported by the UKRI AI Hub (EP/Y028872/1) and the ARIA TA1 project SAINT\@. MJ is funded by the EPSRC Centre for Doctoral Training in Autonomous Intelligent Machines and Systems (EP/S024050/1).

\bibliography{mc,rl,gnn,nlp,ml}
\bibliographystyle{iclr2025_conference}

\newpage

\appendix
\section{LTL satisfaction semantics}
\label{app:ltl_semantics}
The satisfaction semantics of LTL are defined in terms of infinite sequences of truth assignments $a\in 2^{AP}$ (a.k.a.\ $\omega$-words over $2^{AP}$). The satisfaction relation $w\models\varphi$ specifies that $\omega$-word $w$ \textit{satisfies} the specification $\varphi$. It is recursively defined as follows~\citep{baier2008Principles}:
\begin{align*}
    w &\models \mathsf{true}\\
    w &\models \mathsf a &&\text{iff } \mathsf a\in w_0 \\
    w &\models \varphi\land\psi &&\text{iff } w\models\varphi \text{ and } w\models\psi \\
    w &\models \neg\varphi &&\text{iff } w\not\models\varphi \\
    w &\models \nex\varphi &&\text{iff } w[1\ldots]\models\varphi \\
    w &\models \varphi\;\mathsf{U}\;\psi &&\text{iff } \exists j\geq 0 \text{ s.t.\ } w[j\ldots]\models\psi \text{ and } \forall 0\leq i < j.\; w[i\ldots]\models\varphi.
\end{align*}

As noted in the main paper, we can equivalently define the satisfaction semantics via (limit-deterministic) B\"uchi automata. Formally, for any LTL specification $\varphi$ we can construct a B\"uchi automaton that accepts exactly the set $\words(\varphi) = \{w\in\ap \mid w\models\varphi\}$.

\section{Learning probability-optimal policies}

\begin{figure}[b]
        \centering
        \resizebox{.4\textwidth}{!}{
        \begin{tikzpicture}[->,>=stealth',shorten >=1pt,auto,node distance=2cm,semithick]
            \node[state,initial,initial where=above] (s0) {$s_0$};
            \node[state,accepting] (s1) [below left of=s0] {$s_1$};
            \node[state,accepting] (s2) [below right of=s0] {$s_3$};
            \node[state] (s3) [right of=s2] {$\bot$};
            \node[state] (s4) [below of=s1] {$s_2$};
            \node (c) [draw,circle,fill=gray,above right=0.95 and 0.5 of s2] {};
            \path (s0) edge [bend right] node {$a$} (s1)
                  (s1) edge [bend right] node [left] {$\top$} (s4)
                  (s4) edge [bend right] node [right] {$\top$} (s1)
                  (s0) edge node {$b$} (c)
                  (c) edge [bend right] node [left] {$0.99$} (s2)
                  (c) edge [bend left] node [right] {$0.01$} (s3)
                  (s2) edge [loop below] node {$\top$} (s2)
                  (s3) edge [loop below] node {$\top$} (s3);
        \end{tikzpicture}
        }
        \caption{Example product MDP.}
        \label{fig:counterexample}
\end{figure}

\subsection{Eventual discounting}
\label{app:eventual_discounting}

The technique of \textit{eventual discounting}~\citep{voloshin2023Eventual} can be used to find an optimal policy for a given (single) LTL specification $\varphi$ in terms of satisfaction probability via reinforcement learning. It optimises the following objective:
\begin{equation*}
    \pi^*_{\textit{RL}} = \argmax_{\pi}\E_{\tau\sim\pi}\left[\sum_{t=0}^\infty \Gamma_t \mathbb 1[q_t\in\mathcal F_{\mathcal B_{\varphi}}]\right],
    \quad
    \Gamma_t = \gamma^{c_t},
    \quad
    c_t = \sum_{k=0}^t \mathbb 1[q_k\in\mathcal F_{\mathcal B_\varphi}],
\end{equation*}
where $c_t$ counts how often accepting states have been visited up to time step $t$.

To see why eventual discounting is necessary, consider the \textit{product} MDP $\mathcal M^\varphi$ depicted in \cref{fig:counterexample}, and adapted from \citet{voloshin2023Eventual}. The policy starts in state $s_0$ and can  choose either action $a$ or action $b$. Action $a$ always leads to an infinite cycle containing an accepting state, and is thus optimal. Action $b$ on the other hand also leads to an infinite cycle with probability $0.99$, but may lead to a sink state with probability $0.01$. 

Let $\pi_a$ be the (deterministic) policy that chooses $a$ and $\pi_b$ be the policy that chooses $b$. Without eventual discounting, we have:
\begin{equation*}
    J(\pi_a) = \frac1{1-\gamma^2}\qquad\text{and}\qquad J(\pi_b) = \frac{0.99}{1-\gamma},
\end{equation*}
and thus $J(\pi_b) > J(\pi_a)$ for all $\gamma\in (0.01, 1)$. Hence, maximising (standard) expected discounted return produces a suboptimal policy in terms of satisfaction probability.

Eventual discounting addresses this problem by only discounting visits to accepting states, and not the steps in between. In the previous example, this means that $J(\pi_a) > J(\pi_b)$, in line with the satisfaction probability. See \citet{voloshin2023Eventual} for a formal derivation of a bound on the performance of the return-optimal policy under eventual discounting. We next extend this result to the setting of a distribution $\xi$ over LTL specifications $\varphi$.

\subsection{Proof of \texorpdfstring{\cref{theo}}{Theorem 1}}
\label{app:proof}

Given a probability distribution $\xi$ over LTL specifications $\varphi$, we employ the standard formulation of goal-conditioned RL to obtain the multi-task eventual discounting objective:
\begin{equation}\label{eq:eventual_multitask}
    \pi^*_{\textit{RL}} = \argmax_\pi \E_{\substack{\varphi\sim\xi,\\\tau\sim\pi|\varphi}}\left[\sum_{t=0}^\infty \Gamma_t \mathbb 1[q_t\in\mathcal F_{\mathcal B_{\varphi}}]\right],
    \quad
    \Gamma_t = \gamma^{c_t},
    \quad
    c_t = \sum_{k=0}^t \mathbb 1[q_k\in\mathcal F_{\mathcal B_\varphi}].
\end{equation}

\cref{theo} provides a bound on the performance of the policy $\pi^*_{\textit{RL}}$. Our proof closely follows the structure of the proof of \citep[Theorem 4.2]{voloshin2023Eventual}. We begin with the following Lemma:

\begin{lemma}\label{lemm}
    For any $\pi$, $\gamma\in(0,1)$, and $\varphi\in\supp(\xi)$, we have
    \begin{equation*}
        |(1-\gamma)V^\pi - \Pr(\pi\models\varphi)| \leq \log(\frac1{\gamma}) O_\pi,
    \end{equation*}
    where
    $
        O_\pi = \E_{\varphi\sim\xi, \tau\sim\pi|\varphi}\bigl[ |\{q\in\tau_q : q\in\mathcal F_{\mathcal B_\varphi}\}|\,\big|\, \tau\not\models\varphi \bigr]
    $
    is the expected number of visits to accepting states for trajectories that do not satisfy a specification.
\end{lemma}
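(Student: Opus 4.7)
The plan is to directly adapt the single-specification bound from Voloshin et al.'s Theorem~4.2 by performing a case split on whether a sampled trajectory satisfies $\varphi$, and controlling the resulting error via the expected number of accepting visits.

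First I would rewrite the per-trajectory eventual discounting return by summing the geometric series $\sum_{t\geq 0}\Gamma_t\,\mathbb{1}[q_t\in\mathcal F_{\mathcal B_\varphi}]$ in closed form. On a trajectory $\tau$ that satisfies $\varphi$, the automaton visits $\mathcal F_{\mathcal B_\varphi}$ infinitely often, so this sum equals $\frac{1}{1-\gamma}$ (up to the indexing convention used for $c_t$). On a trajectory $\tau$ with exactly $N(\tau) < \infty$ accepting visits, the same sum evaluates to $\frac{1 - \gamma^{N(\tau)}}{1-\gamma}$. Multiplying through by $(1-\gamma)$ and taking the expectation over $\tau\sim\pi|\varphi$ then gives the identity
\begin{equation*}
(1-\gamma)V^\pi = \Pr(\pi\models\varphi) + \E_{\tau\sim\pi|\varphi}\bigl[\mathbb{1}[\tau\not\models\varphi]\bigl(1 - \gamma^{N(\tau)}\bigr)\bigr].
\end{equation*}

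Since the residual term on the right is non-negative, the absolute value in the Lemma collapses to exactly this residual. I would then invoke the elementary inequality $1 - \gamma^N \leq N\log(1/\gamma)$, which follows from $1 - e^{-x}\leq x$ evaluated at $x = N\log(1/\gamma)$. Substituting this bound and applying the law of total expectation yields $\log(1/\gamma)\cdot\Pr(\tau\not\models\varphi)\cdot\E[N(\tau)\mid\tau\not\models\varphi]\leq\log(1/\gamma)\cdot O_\pi$, as required.

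The main bookkeeping obstacle will be reconciling the stated convention $c_t = \sum_{k=0}^t\mathbb{1}[q_k\in\mathcal F_{\mathcal B_\varphi}]$ with the identity $(1-\gamma)V^\pi = 1$ on satisfying trajectories: depending on whether one reads the exponent in $\Gamma_t = \gamma^{c_t}$ as starting at $0$ or $1$, one may need to factor out a $\gamma$ or split off the first accepting visit, but the structure of the argument is unaffected. A secondary subtlety is that the Lemma fixes $\varphi\in\supp(\xi)$ whereas $O_\pi$ as defined in \cref{theo} averages over $\varphi\sim\xi$; I would read the $O_\pi$ inside the Lemma as the corresponding $\varphi$-conditional quantity, and then recover the displayed Theorem bound by averaging both sides of the Lemma over $\varphi\sim\xi$ and invoking the triangle inequality, exactly as the authors signal in their proof sketch.
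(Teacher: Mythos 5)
Your proposal is correct and matches the paper's intended argument: the paper's proof of this lemma simply defers to the proof of Lemma 4.1 in \citet{voloshin2023Eventual}, and your reconstruction (splitting trajectories by satisfaction, evaluating the eventually-discounted geometric sum as $\nicefrac{1-\gamma^{N(\tau)}}{1-\gamma}$, and applying $1-\gamma^{N}\leq N\log(\nicefrac1\gamma)$) is exactly that argument adapted to the multi-task setting. Your reading of $O_\pi$ as the $\varphi$-conditional quantity inside the lemma, recovered in the theorem by averaging over $\varphi\sim\xi$, is also the intended bookkeeping.
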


\begin{proof}
    The proof follows exactly along the lines of the proof of \citep[Lemma 4.1]{voloshin2023Eventual} with our modified definition of $O_\pi$, which includes the expectation over $\varphi$.
\end{proof}

We are now ready to prove the main result:
\setcounter{theorem}{0}
\begin{theorem}
    For any $\gamma\in (0, 1)$ we have
    \begin{equation*}
        \sup_\pi\E_{\varphi\sim\xi}\left[ \Pr(\pi\models\varphi) \right] - \E_{\varphi\sim\xi}\left[ \Pr(\pi_{\textit{RL}}^*\models\varphi) \right]
        \leq
        2\log(\frac{1}{\gamma})\sup_{\pi} O_\pi.
    \end{equation*}
\end{theorem}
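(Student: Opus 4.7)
The plan is to reduce the multi-task theorem to the single-task bound of \citet[Theorem 4.2]{voloshin2023Eventual} by applying it pointwise and taking expectations over $\varphi\sim\xi$. Concretely, I would first invoke \cref{lemm} (or rather its per-$\varphi$ version, where the quantity $O_\pi$ is instantiated with the expectation conditional on a specific $\varphi$) to obtain the inequality
\begin{equation*}
    \abs{(1-\gamma)V^{\pi}_\varphi - \Pr(\pi\models\varphi)} \;\leq\; \log(\tfrac1\gamma)\, O_\pi(\varphi),
\end{equation*}
for every policy $\pi$ and every $\varphi\in\supp(\xi)$. Taking $\E_{\varphi\sim\xi}$ on both sides and using the triangle inequality together with linearity of expectation yields
\begin{equation*}
    \abs{(1-\gamma)\,\E_{\varphi\sim\xi}[V^{\pi}_\varphi] - \E_{\varphi\sim\xi}[\Pr(\pi\models\varphi)]} \;\leq\; \log(\tfrac1\gamma)\, O_\pi,
\end{equation*}
with $O_\pi$ now matching the definition given in the main text.

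Next, let $\pi^\dagger \in \argsup_\pi \E_{\varphi\sim\xi}[\Pr(\pi\models\varphi)]$ denote the probability-optimal policy (the argument is symbolic if the supremum is not attained; one can work with an $\epsilon$-optimal $\pi^\dagger$ and let $\epsilon\to 0$ at the end). Applying the displayed inequality once with $\pi=\pi^\dagger$ (in the direction that upper bounds $\E_\varphi[\Pr(\pi^\dagger\models\varphi)]$ by $(1-\gamma)\E_\varphi[V^{\pi^\dagger}_\varphi]+\log(1/\gamma)O_{\pi^\dagger}$) and once with $\pi=\pi^*_{\textit{RL}}$ (in the direction that lower bounds $\E_\varphi[\Pr(\pi^*_{\textit{RL}}\models\varphi)]$ by $(1-\gamma)\E_\varphi[V^{\pi^*_{\textit{RL}}}_\varphi]-\log(1/\gamma)O_{\pi^*_{\textit{RL}}}$) and subtracting gives
\begin{equation*}
    \E_{\varphi\sim\xi}[\Pr(\pi^\dagger\models\varphi)] - \E_{\varphi\sim\xi}[\Pr(\pi^*_{\textit{RL}}\models\varphi)]
    \leq (1-\gamma)\bigl(\E_\varphi[V^{\pi^\dagger}_\varphi] - \E_\varphi[V^{\pi^*_{\textit{RL}}}_\varphi]\bigr) + \log(\tfrac1\gamma)\bigl(O_{\pi^\dagger}+O_{\pi^*_{\textit{RL}}}\bigr).
\end{equation*}

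The final step is to exploit the defining property of $\pi^*_{\textit{RL}}$: it maximises the multi-task eventual-discounting objective \labelcref{eq:eventual_multitask}, which is precisely $\E_{\varphi\sim\xi}[V^\pi_\varphi]$ at the initial state (in expectation over $\mu$). Therefore $\E_\varphi[V^{\pi^\dagger}_\varphi]-\E_\varphi[V^{\pi^*_{\textit{RL}}}_\varphi]\leq 0$, and the first term drops. Bounding $O_{\pi^\dagger},O_{\pi^*_{\textit{RL}}}\leq \sup_\pi O_\pi$ yields the stated factor of $2$. The main obstacle I anticipate is bookkeeping: carefully aligning the definition of $O_\pi$ (which in the paper bakes in the expectation over $\xi$) with the pointwise application of \cref{lemm}, and making sure both directions of the absolute-value bound are used with the correct sign so that the $(1-\gamma)$-term cancels cleanly via the optimality of $\pi^*_{\textit{RL}}$ for the eventual-discounting objective.
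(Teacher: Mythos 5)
Your proposal is correct and follows essentially the same route as the paper's proof: apply \cref{lemm} to both the (near-)optimal policy and $\pi^*_{\textit{RL}}$, add and subtract the $(1-\gamma)V^\pi$ terms, drop the value-difference term using the optimality of $\pi^*_{\textit{RL}}$ for the eventual-discounting objective \labelcref{eq:eventual_multitask}, and bound the two residual terms by $\sup_\pi O_\pi$ to obtain the factor of $2$, handling the supremum via an approximating sequence. The bookkeeping issue you flag about whether $O_\pi$ is per-$\varphi$ or averaged over $\xi$ is real but harmless; the paper resolves it by pushing the absolute value inside the expectation via $\abs{\E[X]}\leq\E[\abs{X}]$ before invoking the lemma.
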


\begin{proof}
    Let $(\pi_i)_{i\in\mathbb N}$ be a sequence of policies such that
    \[
    \E_{\varphi\sim\xi}\left[ \Pr(\pi_i\models\varphi) \right] \xrightarrow{i\to\infty} \sup_\pi \E_{\varphi\sim\xi}\left[ \Pr(\pi\models\varphi) \right].
    \]
By the linearity of expectation, we have
\begin{align*}
&\E_{\varphi\sim\xi}\left[ \Pr(\pi_i\models\varphi) \right] - \E_{\varphi\sim\xi}\left[ \Pr(\pi^*_{\textit{RL}}\models\varphi) \right]\\
= &\E_{\varphi\sim\xi}\left[ \Pr(\pi_i\models\varphi) - \Pr(\pi^*_{\textit{RL}}\models\varphi) \right].
\end{align*}
We add and subtract the terms $(1-\gamma)V^{\pi_i}$ and $(1-\gamma)V^{\pi^*_{\textit{RL}}}$ and apply the triangle inequality to obtain
\begin{equation}\label{eq:intermediate}
    \begin{split}
    \leq \bigg| \E_\varphi\left[ \Pr(\pi_i\models\varphi) - (1-\gamma)V^{\pi_i} \right] \bigg|
    &+ \bigg| \E_\varphi\left[ \Pr(\pi^*_{\textit{RL}}\models\varphi) - (1-\gamma)V^{\pi^*_{\textit{RL}}} \right] \bigg|\\
    &+ \E_\varphi\left[ (1-\gamma)V^{\pi_i} - (1-\gamma)V^{\pi^*_{\textit{RL}}}\right].
\end{split}
\end{equation}
The last term is negative since
\begin{align*}
    \E_\varphi\left[ (1-\gamma)V^{\pi_i} - (1-\gamma)V^{\pi^*_{\textit{RL}}}\right]
    &=(1-\gamma)\E_\varphi\left[ V^{\pi_i} - V^{\pi^*_{\textit{RL}}} \right]\\
    &=(1-\gamma)(V^{\pi_i} - V^{\pi^*_{\textit{RL}}})\\
    &\leq 0\qquad\text{(by the definition of $\pi^*_{\textit{RL}}$)}.
\end{align*}
Note that for any random variable $X$, again by the triangle inequality,
\begin{equation*}
    |\E[X]| = \bigg| \sum_x x\Pr(X=x)\bigg| \leq \sum_x |x|\Pr(X=x) = \E[|X|],
\end{equation*}
and we can hence continue from \cref{eq:intermediate} by applying \cref{lemm} as follows (where we utilise the fact the expectations respect inequalities):
\begin{align*}
    \text{(\labelcref{eq:intermediate})} &\leq \E_\varphi\left[ \log(\frac1{\gamma})(O_{\pi_i} + O_{\pi^*_{\textit{RL}}}) \right]\\
    &\leq \E_\varphi\left[2\log(\frac1{\gamma}) \sup_\pi O_\pi  \right]\\
    &= 2\log(\frac1{\gamma}) \sup_\pi O_\pi,
\end{align*}
which, together with taking the limit as $i\to\infty$, concludes the proof.
\end{proof}

\subsection{DeepLTL with eventual discounting}
\label{app:deepltl_eventual_discounting}
DeepLTL can be readily extended with eventual discounting for settings in which satisfaction probability is the primary concern, and efficiency is less important. In this case, we want to use our approach to approximate a solution to \cref{eq:eventual_multitask}.

To do so, we only need to assume access to the distribution $\xi$ over LTL formulae. During training, we sample specifications $\varphi\sim\xi$ and train the sequence-conditioned policy using all reach-avoid sequences extracted from $\mathcal B_\varphi$. Crucially, we extend each step of a reach-avoid sequence to include an additional Boolean flag that specifies whether the corresponding LDBA transition leads to an accepting state. These flags are given as input to the policy network, and are used to compute the eventual discounting objective. The rest of our approach remains unchanged. We leave an experimental investigation of this scheme for future work.

\section{Discounted LTL}
\label{app:discounted_ltl}
\textit{Discounted LTL}~\citep{almagor2014Discounting} extends LTL with discounting semantics. This allows one to specify not only that a system should satisfy a given specification (as in classical LTL), but also \textit{how well} it should do so. 
The main idea is to replace the ``until'' operator with a discounted version that assigns a quantitative satisfaction value in $[0,1]$ to a system trace. Intuitively, the longer the system takes to satisfy a requirement, the smaller the assigned satisfaction value.

The problem of finding an optimal policy with respect to a discounted LTL formula in an unknown MDP has been thoroughly investigated by \citet{alur2023Policy}, and is closely related to our problem of efficient LTL satisfaction (\cref{prob:efficient}). Indeed, for simple formulae such as $\varphi = \altevent_\gamma\, a$ (where $\altevent_\gamma$ denotes the $\gamma$-discounted version of the $\altevent$ operator) it is easy to see that \cref{eq:objective} corresponds exactly to maximising the quantitative satisfaction value of $\varphi$. We leave a fuller investigation of the relationship between discounting rewards from accepting LDBA states and the semantics of discounted LTL, as well as an integration of our approach with the reward scheme proposed by \citet{alur2023Policy}, for future work.

\section{Extended related work}
\label{app:extended_related_work}
The field of RL with LTL specifications has attracted significant attention in the last few years. Here we provide a more detailed overview of work in this domain and discuss how it relates to our approach.

\paragraph{RL with a single LTL specification.}
Early works on RL with LTL specifications relied on estimating a model of the underlying MDP, and then solving this model for a probability-maximising policy~\citep{fu2014Probably,brazdil2014Verification,sadigh2014learning}. A model-free approach based on $Q$-learning~\citep{watkins1989Learning,watkins1992Qlearning} was introduced by \citet{hasanbeig2018LogicallyConstrained}, who proposed to use LDBAs to keep track of formula satisfaction. Subsequent works extend this approach, providing stronger convergence guarantees~\citep{hahn2019OmegaRegular,bozkurt2020Control,hahn2023Mungojerrie,voloshin2023Eventual}, improved sample efficiency~\citep{hahn2020Faithful,kazemi2020Formal,cai2021Modular,shao2023Sample,shah2024LTLConstrained, bagatella2024Directed}, or guarantees in adversarial environments~\citep{bozkurt2024Learning}. Alternative methods reduce the problem to an RL objective with limit-average rewards instead of the standard discounted setting~\citep{kazemi2022Translating,le2024Reinforcement}. A variety of works also consider LTL$_f$~\citep{degiacomo2013Linear} or similar specification languages over \textit{finite} traces, such as reward machines~\citep{toroicarte2018Using,degiacomo2018Reinforcement,degiacomo2019Foundations,jothimurugan2019Composable,jothimurugan2021Compositional,toroicarte2022Reward}.

These approaches all consider only a \textit{single}, fixed LTL specification, i.e.\ they learn a policy that maximises the probability of satisfying a given formula $\varphi$. In contrast, our approach is realised in a multi-task RL setting: we focus on learning a task-conditional policy that can zero-shot execute arbitrary LTL specifications at test time.

\paragraph{Generalising to multiple tasks.}
\citet{toroicarte2018Teaching} were among the first to consider the problem of training a policy to complete multiple different tasks expressed in LTL\@. They propose a hierarchical algorithm based on $Q$-learning, which composes policies trained on subtasks. However, their approach is not able to generalise to novel formulae at test time, since these might consist of subtasks that the agent has not seen during training. \citet{kuo2020Encoding} instead propose to leverage goal-conditioned RL with a compositional RNN architecture that consists of one RNN for every element in the syntax tree of a given LTL formula. While this method is shown to be able to generalise to tasks outside of the training distribution, it requires learning a non-stationary policy, which is known to be challenging~\citep{vaezipoor2021LTL2Action}.

\citet{leon2021Systematic} introduce a different method for tasks expressed in a sub-fragment of LTL$_f$. They first employ a reasoning module to extract propositions that make progress towards solving the given task. These propositions are then achieved by a trained goal-conditioned policy. Similarly, \citet{liu2024Skill} propose a transfer algorithm that first trains a number of options on a set of training instructions, and then composes them at test time to achieve novel tasks. However, as noted by \citet{vaezipoor2021LTL2Action} these approaches are inherently \textit{myopic}: they do not take future propositions into account when executing the next subtask, and hence can produce suboptimal solutions. Instead, \citet{vaezipoor2021LTL2Action} propose to directly encode the syntax tree of a given LTL formula using a GNN and predict actions based on the learned representations. To deal with the non-Markovian nature of LTL, they employ \textit{LTL progression}~\citep{bacchus2000Using}. A more direct modification to the work of \citet{leon2021Systematic} is proposed by \citet{xu2024Generalization}, who train \textit{future-dependent} options for every proposition. These option policies are conditioned not only on the proposition to be achieved next, but also on the remaining propositions that need to be satisfied in the future. \citet{qiu2023Instructing} introduce the first approach that can handle $\omega$-regular specifications. Their technique is based on training goal-conditioned policies $\pi(\cdot | s, p)$ to achieve arbitrary propositions $p\in AP$ in the environment. At test time, they employ a planning procedure to select a sequence of propositions to satisfy and finally execute the according low-level policies.

Our approach differs in a variety of ways from these previous methods: we leverage the structure of B\"uchi automata to find possible ways of satisfying a given specification. By operating on B\"uchi automata, our method can naturally handle $\omega$-regular specifications, which only GCRL-LTL is also capable of. However, in comparison to GCRL-LTL our method is non-myopic, since it incorporates the temporally extended structure of tasks via sequences of reach-avoid assignments. Additionally, compared to other methods that employ a high-level planning procedure~\citep{qiu2023Instructing,xu2024Generalization} our approach considers safety requirements when selecting the optimal reach-avoid sequence, yielding plans that are more likely to be able to be executed without safety violations by the policy.

\paragraph{Automata-conditioned RL.}
A recent line of work related to LTL objectives is \textit{automata-conditioned RL}~\citep{yalcinkaya2023Automata,yalcinkaya2024Compositional}. Instead of using LTL to specify tasks, this line of research directly employs (compositions of) deterministic finite automata (DFAs). \citet{yalcinkaya2024Compositional} explore this setting in a multi-task RL context, and propose to train a goal-conditioned policy conditioned on learned embeddings of DFAs. They also introduce the class of \textit{reach-avoid derived} (RAD) DFAs, which they use for pre-training the automata embeddings.

Compared to our approach, automata-conditioned RL is strictly less expressive, since every DFA can trivially be represented as an LDBA (including objectives not expressible by LTL~\citep{cohen-chesnot1991expressive}), but the converse is not true. In particular, DFAs cannot capture infinite-horizon tasks. Additionally, in contrast to RAD-embeddings~\citep{yalcinkaya2024Compositional}, we do not compute an embedding for the entire automaton, but instead extract satisfying reach-avoid sequences and select the optimal one according to the learned value function. This separates high-level reasoning (how to satisfy the specification) from low-level reasoning (how to act in the MDP) and allows the goal-conditioned policy to focus on achieving one particular sequence of propositions. We experimentally compare our approach to RAD-embeddings in \cref{sec:rad}.

\section{Computing accepting cycles}
\label{app:accepting_cycles}
The algorithm to compute paths to accepting cycles is listed in \cref{alg:cycles}. The algorithm is based on depth-first search and keeps track of the currently visited path in order to extract all possible paths to accepting cycles.

\begin{algorithm}
	\caption{Computing paths to accepting cycles}
	\label{alg:cycles}
	\begin{algorithmic}[1]
		\Require
			\Statex An LDBA $B = (\mathcal Q, q_0, \Sigma, \delta, \mathcal F, \mathcal E)$ and current state $q$.
		\Procedure{DFS}{$q$, $p$, $i$}\Comment{$i$ is in the index of the last seen accepting state, or $-1$ otherwise}
        \State $P\gets \emptyset$
        \If{$q\in\mathcal F$}
            \State $i\gets |p|$
        \EndIf
        \ForAll {$a \in 2^{AP}\cup\{\varepsilon\}$}
            \State $p'\gets [p,q]$
            \State $q' \gets \delta(q, a)$
            \If{$q'\in p$}
                \If{index of $q'$ in $p$ $\leq i$}
                    \State $P = P \cup \{p'\}$
                \EndIf
            \Else
                \State $P = P\,\cup$ \textsc{DFS}($q'$, $p'$, $i$)
            \EndIf
        \EndFor
        \State \textbf{return} $P$
		
		\EndProcedure
        \State $i\gets 0 $ if $q\in\mathcal F $ else $i\gets -1$
        \State \textbf{return } \textsc{DFS}($q$, $[]$, $i$)
	\end{algorithmic}
\end{algorithm}

\section{Experimental details}
\subsection{Environments}

\label{app:environments}
\paragraph{LetterWorld.}
The \textit{LetterWorld} environment has been introduced by \citet{vaezipoor2021LTL2Action}. It is a $7\times 7$ grid world that contains 12 randomly placed letters corresponding to atomic propositions. Each letter appears twice, i.e.\ 24 out of the 49 squares are occupied, and there are thus multiple ways of solving any given task. The agent observes the full grid from an egocentric view. At each step, the agent can move up, right, down, or left. If it moves out of bounds, the agent is immediately placed on the opposite end of the grid. See \cref{fig:letter_viz} for an illustration of the \textit{LetterWorld} environment.

\paragraph{ZoneEnv.}
We adapt the \textit{ZoneEnv} environment introduced by \citet{vaezipoor2021LTL2Action}. The environment is a walled plane with 8 circular regions (``zones'') that have four different colours and form the atomic propositions. Our implementation is based on the Safety Gymnasium suite~\citep{ji2023OmniSafe} and uses the \textit{Point} robot, which has a continuous action space for acceleration and steering. The environment features a high-dimensional state space based on lidar information about the zones, and data from other sensors. Both the zone and robot positions are randomly sampled at the beginning of each episode. If the agent at any point touches a wall, it receives a penalty and the episode is immediately terminated. A visualisation of the \textit{ZoneEnv} environment is provided in \cref{fig:zone_viz}.

\paragraph{FlatWorld.}
The \textit{FlatWorld} environment~\citep{voloshin2023Eventual,shah2024LTLConstrained} consists of a two-dimensional continuous world ($\mathcal S = [-2,2]^2$) with a discrete action space. Atomic propositions are given by various coloured regions. Importantly, these regions overlap in various places, which means that multiple propositions can hold true at the same time. The initial agent position is sampled randomly from the space in which no propositions are true. At each time step, the agent can move in one of the 8 compass directions. If it leaves the boundary of the world, the agent receives a penalty and the episode is terminated prematurely. \cref{fig:flatworld_viz} shows a visualisation of the \textit{FlatWorld} environment.

\subsection{Testing specifications}
\label{app:specifications}
\cref{tab:finite_specs,tab:infinite_specs} list the finite and infinite-horizon specifications used in our evaluation, respectively.

\begin{figure}
    \centering
    \begin{subfigure}{0.23\textwidth}
        \centering
        \frame{\includegraphics[width=\textwidth]{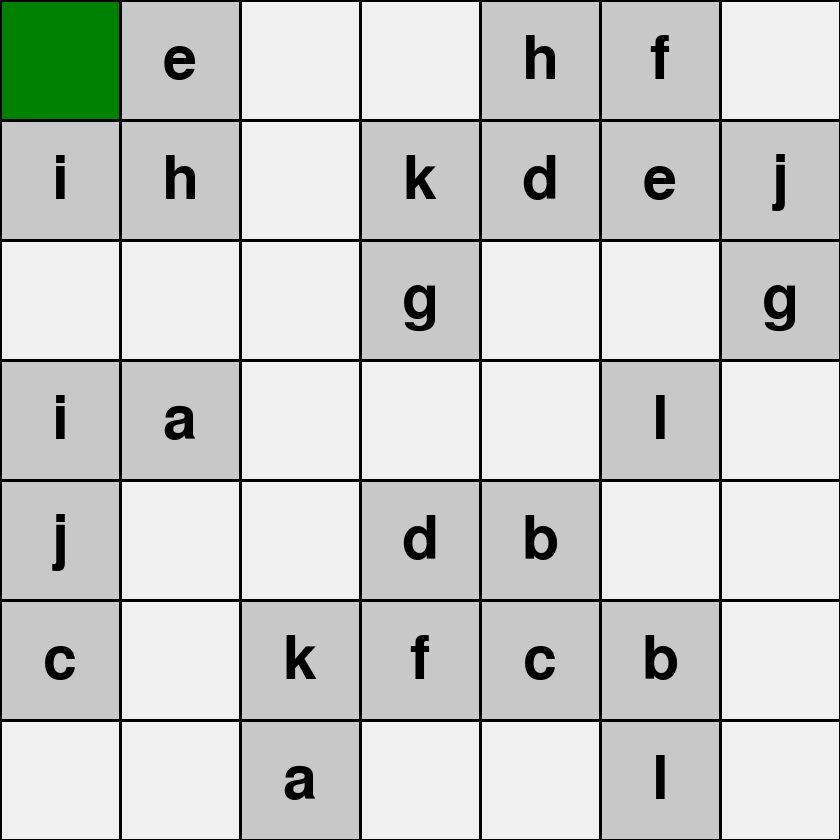}}
        \caption{LetterWorld}
        \label{fig:letter_viz}
    \end{subfigure}
    \hspace{.5cm}
    \begin{subfigure}{0.33\textwidth}
        \centering
        \includegraphics[width=\textwidth]{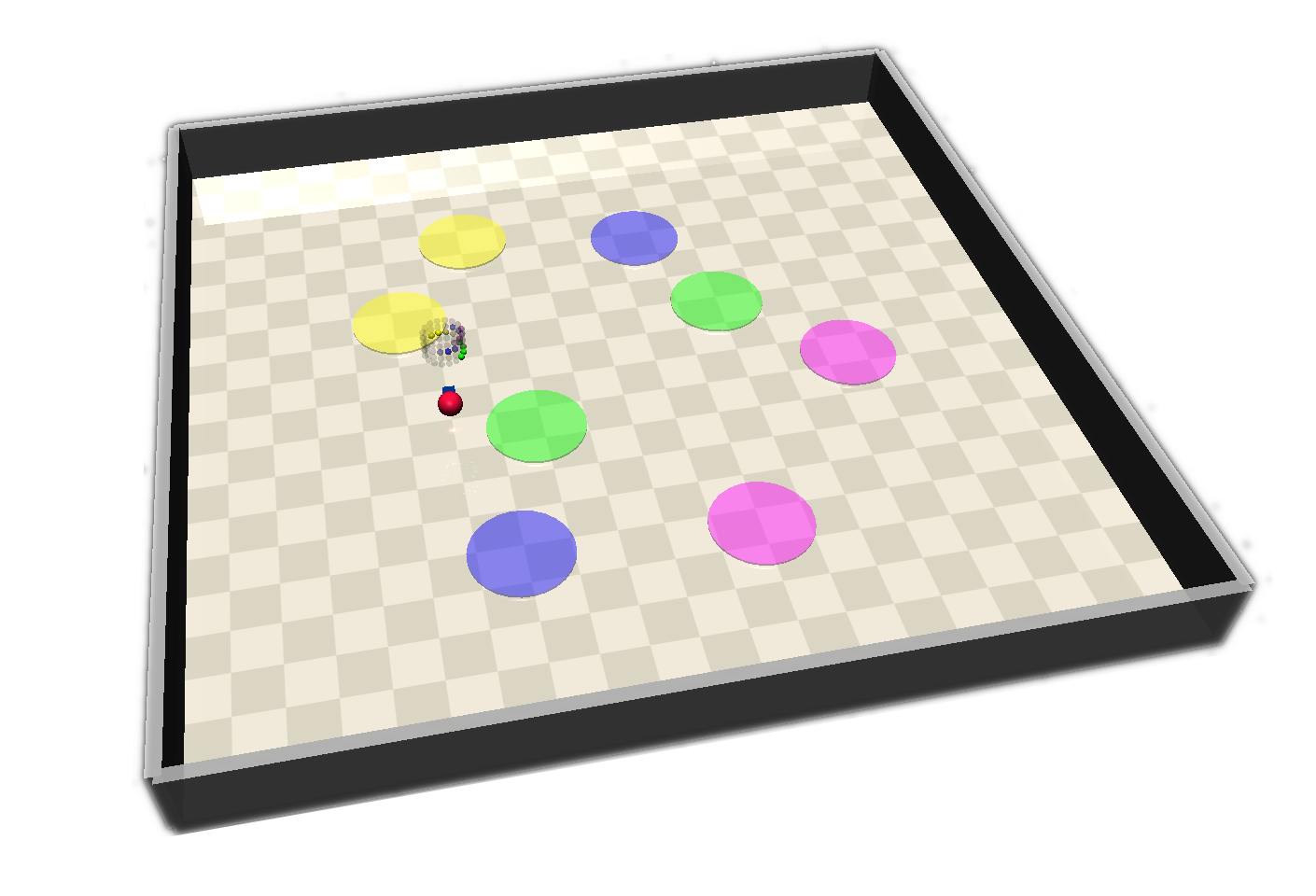}
        \caption{ZoneEnv}
        \label{fig:zone_viz}
    \end{subfigure}
    \hspace{.5cm}
    \begin{subfigure}{0.23\textwidth}
        \centering
        \frame{\includegraphics[width=\textwidth]{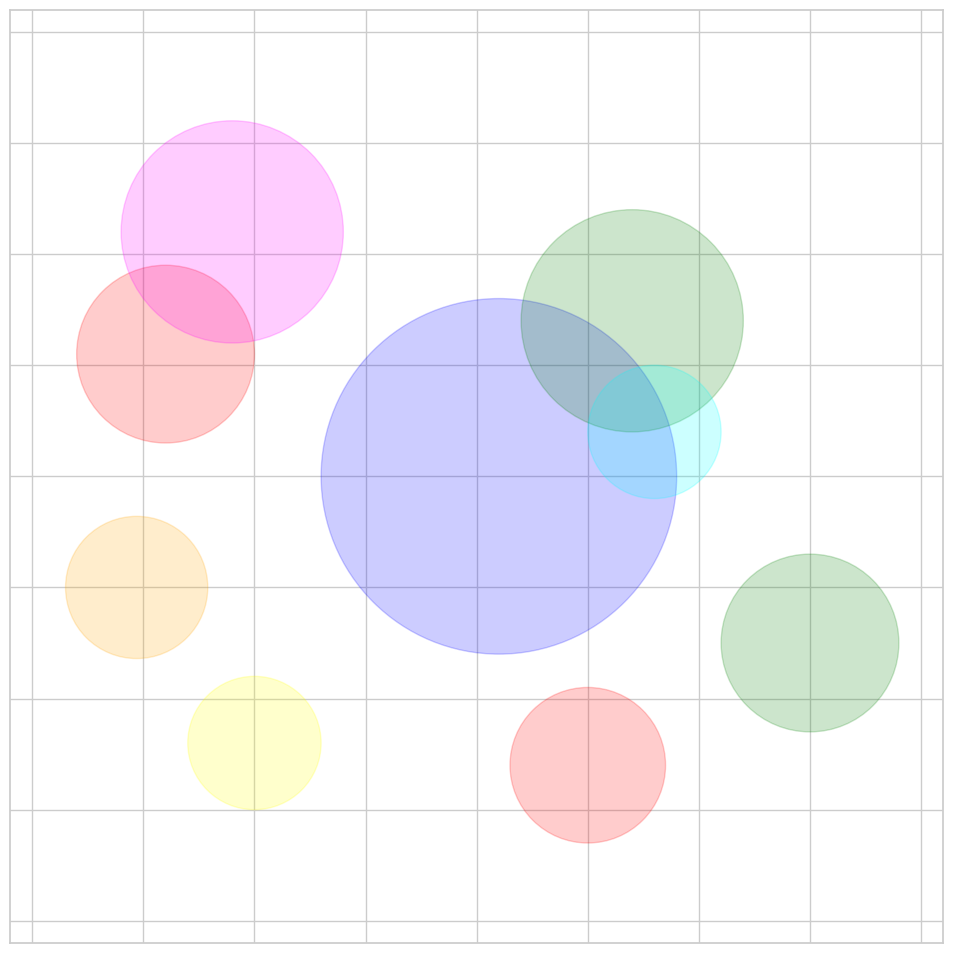}}
        \caption{FlatWorld}
        \label{fig:flatworld_viz}
    \end{subfigure}
    \caption{Visualisations of environments.}
    \label{fig:envs}
\end{figure}

\begin{table}[p]
    \centering
    \caption{\textit{Complex} finite-horizon specifications used in our evaluation.}
    \label{tab:finite_specs}
    \renewcommand{\arraystretch}{1.2}
    \begin{tabular}{@{}lll@{}}
        \toprule

\multirow{5}{*}{\rotatebox[origin=c]{90}{LetterWorld}} & $\varphi_{1}$ & $\event (\mathsf{a} \land (\neg \mathsf{b} \until \mathsf{c})) \land \event \mathsf{d}$\\
 & $\varphi_{2}$ & $\event \mathsf{d} \land (\neg \mathsf{f} \until (\mathsf{d} \land \event \mathsf{b}))$\\
 & $\varphi_{3}$ & $\event ((\mathsf{a} \lor \mathsf{c} \lor \mathsf{j}) \land \event \mathsf{b}) \land \event (\mathsf{c} \land \event \mathsf{d}) \land \event \mathsf{k}$\\
 & $\varphi_{4}$ & $\neg \mathsf{a} \until (\mathsf{b} \land (\neg \mathsf{c} \until (\mathsf{d} \land (\neg \mathsf{e} \until \mathsf{f}))))$\\
 & $\varphi_{5}$ & $((\mathsf{a} \lor \mathsf{b} \lor \mathsf{c} \lor \mathsf{d}) \Rightarrow \event (\mathsf{e} \land (\event (\mathsf{f} \land \event \mathsf{g})))) \until (\mathsf{h} \land \event \mathsf{i})$\\
\midrule
\multirow{6}{*}{\rotatebox[origin=c]{90}{ZoneEnv}} & $\varphi_{6}$ & $\event (\mathsf{green} \land (\neg \mathsf{blue} \until \mathsf{yellow})) \land \event \mathsf{magenta}$\\
 & $\varphi_{7}$ & $\event \mathsf{blue} \land (\neg \mathsf{blue} \until (\mathsf{green} \land \event \mathsf{yellow}))$\\
 & $\varphi_{8}$ & $\event (\mathsf{blue} \lor \mathsf{green}) \land \event \mathsf{yellow} \land \event \mathsf{magenta}$\\
 & $\varphi_{9}$ & $\neg (\mathsf{magenta} \lor \mathsf{yellow}) \until (\mathsf{blue} \land \event \mathsf{green})$\\
 & $\varphi_{10}$ & $\neg \mathsf{green} \until ((\mathsf{blue} \lor \mathsf{magenta}) \land (\neg \mathsf{green} \until \mathsf{yellow}))$\\
 & $\varphi_{11}$ & $((\mathsf{green} \lor \mathsf{blue}) \Rightarrow (\neg \mathsf{yellow} \until \mathsf{magenta})) \until \mathsf{yellow}$\\
\midrule
\multirow{5}{*}{\rotatebox[origin=c]{90}{FlatWorld}} & $\varphi_{12}$ & $\event ((\mathsf{red} \land \mathsf{magenta}) \land \event ((\mathsf{blue} \land \mathsf{green}) \land \event \mathsf{yellow}))$\\
 & $\varphi_{13}$ & $\event (\mathsf{orange} \land (\neg \mathsf{red} \until \mathsf{magenta}))$\\
 & $\varphi_{14}$ & $(\neg \mathsf{red} \until (\mathsf{green} \land \mathsf{blue} \land \mathsf{aqua})) \land \event (\mathsf{orange} \land (\event (\mathsf{red} \land \mathsf{magenta})))$\\
 & $\varphi_{15}$ & $((\neg \mathsf{yellow} \land \neg \mathsf{orange}) \until (\mathsf{green} \land \mathsf{blue})) \land (\neg \mathsf{green} \until \mathsf{magenta})$\\
 & $\varphi_{16}$ & $(\mathsf{blue} \Rightarrow \event \mathsf{magenta}) \until (\mathsf{yellow} \lor ((\mathsf{green} \land \mathsf{blue}) \land \event \mathsf{orange}))$\\
\bottomrule
    \end{tabular}
\end{table}

\begin{table}[p]
    \centering
    \caption{Infinite-horizon specifications used in our evaluation.}
    \label{tab:infinite_specs}
    \renewcommand{\arraystretch}{1.2}
    \begin{tabular}{@{}lll@{}}
        \toprule

\multirow{2}{*}{LetterWorld} & $\psi_{1}$ & $\always \event (\mathsf{e} \land (\neg \mathsf{a} \until \mathsf{f}))$\\
 & $\psi_{2}$ & $\always \event \mathsf{a} \land \always \event \mathsf{b} \land \always \event \mathsf{c} \land \always \event \mathsf{d} \land \always (\neg \mathsf{e} \land \neg \mathsf{f})$\\
\midrule
\multirow{2}{*}{ZoneEnv} & $\psi_{3}$ & $\always \event \mathsf{blue} \land \always \event \mathsf{green}$\\
 & $\psi_{4}$ & $\always \event \mathsf{blue} \land \always \event \mathsf{green} \land \always \event \mathsf{yellow} \land \always \neg \mathsf{magenta}$\\
\midrule
\multirow{2}{*}{FlatWorld} & $\psi_{5}$ & $\always \event (\mathsf{blue} \land \mathsf{green}) \land \always \event (\mathsf{red} \land \mathsf{magenta})$\\
 & $\psi_{6}$ & $\always \event (\mathsf{aqua} \land \mathsf{blue}) \land \always \event \mathsf{red} \land \always \event \mathsf{yellow} \land \always \neg \mathsf{green}$\\
\bottomrule
    \end{tabular}
\end{table}

\begin{table}[]
    \centering
    \caption{Hyperparameters for PPO\@. Dashes (---) indicate that the hyperparameter value is the same across all three methods.}
    \label{tab:ppo_params}
    \begin{tabular}{@{}llrrrrrr@{}}
        \toprule
        &  & LTL2Action &  & GCRL-LTL &  & DeepLTL  \\ \midrule
        
\multirow{12}{*}{\rotatebox{90}{\textnormal{LetterWorld}}}& {Number of processes} & --- & & 16 & & --- \\
& {Steps per process per update} & --- & & 128 & & --- \\
& {Epochs} & --- & & 8 & & --- \\
& {Batch size} & --- & & 256 & & --- \\
& {Discount factor} & --- & & 0.94 & & --- \\
& {GAE-$\lambda$} & --- & & 0.95 & & --- \\
& {Entropy coefficient} & --- & & 0.01 & & --- \\
& {Value loss coefficient} & --- & & 0.5 & & --- \\
& {Max gradient norm} & ---& & 0.5 & & --- \\
& {Clipping ($\epsilon$)} & --- & & 0.2 & & --- \\
& {Adam learning rate} & --- & & 0.0003 & & --- \\
& {Adam epsilon} & --- & & 1e-08 & & --- \\
\midrule
\multirow{12}{*}{\rotatebox{90}{\textnormal{ZonesEnv}}}& {Number of processes} & --- & & 16 & & --- \\
& {Steps per process per update} & 4096 & & 3125 & & 4096 \\
& {Epochs} & --- & & 10 & & --- \\
& {Batch size} & 2048 & & 1000 & & 2048 \\
& {Discount factor} & --- & & 0.998 & &--- \\
& {GAE-$\lambda$} & --- & & 0.95 & & --- \\
& {Entropy coefficient} &--- & & 0.003 & & ---\\
& {Value loss coefficient} &--- & & 0.5 & &--- \\
& {Max gradient norm} & --- & & 0.5 & & --- \\
& {Clipping ($\epsilon$)} &--- & & 0.2 & &--- \\
& {Adam learning rate} &--- & & 0.0003 & & --- \\
& {Adam epsilon} & ---& & 1e-08 & & --- \\
\midrule
\multirow{12}{*}{\rotatebox{90}{\textnormal{FlatWorld}}}& {Number of processes} & --- & & 16 & & --- \\
& {Steps per process per update} & --- & & 4096 & & --- \\
& {Epochs} & --- & & 10 & & --- \\
& {Batch size} & --- & & 2048 & & --- \\
& {Discount factor} & --- & & 0.98 & &--- \\
& {GAE-$\lambda$} & --- & & 0.95 & & --- \\
& {Entropy coefficient} & --- & & 0.003 & & --- \\
& {Value loss coefficient} & --- & & 0.5 & &--- \\
& {Max gradient norm} & --- & & 0.5 & & --- \\
& {Clipping ($\epsilon$)} & --- & & 0.2 & & --- \\
& {Adam learning rate} & --- & & 0.0003 & & --- \\
& {Adam epsilon} & ---& & 1e-08 & & --- \\
\bottomrule
                \end{tabular}
                \end{table}

\subsection{Hyperparameters}
\label{app:hyperparameters}

\paragraph{Neural network architectures.}
Our choice of neural network architectures is similar to previous work~\citep{vaezipoor2021LTL2Action}. For DeepLTL and LTL2Action, we employ a fully connected actor network with [64, 64, 64] units and ReLU as the activation function. The critic has network structure [64, 64] and uses Tanh activations in \textit{LetterWorld} and \textit{ZoneEnv}, and ReLU activations in \textit{FlatWorld}. The actor is composed with a softmax layer in discrete action spaces, and outputs the mean and standard deviation of a Gaussian distribution in continuous action spaces. GCRL-LTL uses somewhat larger actor and critic networks with structure [512, 1024, 256] and ReLU activations in the \textit{ZoneEnv} environment.

The observation module is environment-specific. For the \textit{ZoneEnv} and \textit{FlatWorld} environments, it consists of a simple fully connected network with [128, 64] units and Tanh activations, or [16, 16] units and ReLU activations, respectively. GCRL-LTL instead uses a simple projection of the input to dimensionality 100 in \textit{ZoneEnv}. For \textit{LetterWorld}, the observation module is a CNN with 16, 32, and 64 channels in three hidden layers, a kernel size of $2\times 2$, stride of 1, no padding, and ReLU activations.

Finally, the sequence module consists of learned embeddings $\phi$ of dimensionality 32 in \textit{LetterWorld}, and 16 in \textit{ZoneEnv} and \textit{FlatWorld}. The non-linear transformation $\rho$ is a fully connected network with [32, 32] units in \textit{LetterWorld}, and [32, 16] units in \textit{ZoneEnv} and \textit{FlatWorld}. We use ReLU activations throughout for the sequence module. For the baselines, we use the hyperparameters reported in the respective papers~\citep{vaezipoor2021LTL2Action,qiu2023Instructing}.

\paragraph{PPO hyperparameters.}
The hyperparameters for PPO~\citep{schulman2017Proximal} are listed in \cref{tab:ppo_params}. We use the Adam optimiser~\citep{kingma2015Adam} for all methods and environments.

\paragraph{Additional hyperparameters.}
LTL2Action requires an LTL task sampler to sample a random training specification at the beginning of each episode. We follow \citet{vaezipoor2021LTL2Action} and sample specifications from the space of \textit{reach/avoid} tasks. We also experimented with sampling more complex specifications, but found this detrimental to performance. For GCRL-LTL, we set the value threshold $\sigma$ to 0.9 in the \textit{ZoneEnv} environment, and 0.92 in \textit{LetterWorld} and \textit{FlatWorld}. The threshold $\lambda$ for strict negative assignments in DeepLTL is set to 0.4 across experiments.

\subsection{Training curricula}

\label{app:curricula}
We design training curricula in order to gradually expose the policy to more challenging tasks. The general structure of the curricula is the same across environments: we start with simple and short reach-avoid sequences, and move to more complicated sequences once the policy achieves satisfactory performance.

\paragraph{LetterWorld.}
In the \textit{LetterWorld} environment, the first curriculum stage consists of reach-avoid tasks of depth 1 with single propositions, e.g. $\left( \{\{a\}\}, \{\{b\}\}\} \right)$. Once the policy achieves an average satisfaction rate of $95\%$ on these sequences, we move to the next curriculum stage, in which the depth is still 1, and $|A_1^+| \leq 2, |A_1^-|\leq 2$. The next stage consists of the same tasks with a length of 2. The final curriculum stage consists of length-3 sequences with $|A_i^+| \leq 2, |A_i^-|\leq 3$.

\paragraph{ZoneEnv.}
For \textit{ZoneEnv}, the first stages consist of first only reach-sequences of length up to $2$ (i.e.\ $A_i^- = \emptyset)$ and then reach-avoid sequences of length up to $2$. We then increase the cardinality of the positive and negative assignments, while introducing sequences aligned with reach-stay tasks, e.g.\
$$\Bigl(\Bigl(\bigl\{\{\mathsf{green}\}\bigr\}, 2^{AP}\setminus\{\mathsf{green}\}\Bigr), \ldots\Bigr).$$

\paragraph{FlatWorld.}
In \textit{FlatWorld}, we first sample a mixture of reach- and reach-avoid sequences of depth up to 2. Once the policy achieves a success rate of $80\%$, we increase the cardinality of the positive and negative assignments up to 2.

\section{Further experimental results}
\label{app:further_results}

\begin{table}[t]
    \centering
    \caption{Evaluation results of trained policies on \textit{persistence} tasks. We report the average number of time steps for which the policy successfully remains in the target region after executing the $\varepsilon$-action. Results are averaged over 5 seeds and 500 episodes per seed. ``$\pm$'' indicates the standard deviation over seeds.}
    \label{tab:fg}
    \resizebox{.85\textwidth}{!}{
    \begin{tabular}{@{}lrr@{}}
        \toprule
        & GCRL-LTL & DeepLTL  \\ \midrule
        
    $\event\always\mathsf{blue}$ & 265.53$_{\pm94.54 }$ & \textbf{562.81}$_{\pm136.28 }$  \\
    $\event\always\mathsf{blue}\land\event(\mathsf{yellow}\land\event\mathsf{green})$  & 178.12$_{\pm62.88 }$ & \textbf{336.81}$_{\pm069.43 }$\\
    $\event\always\mathsf{magenta}\land\always\neg\mathsf{yellow}$ & 406.52$_{\pm75.22 }$ & \textbf{587.98}$_{\pm123.63 }$ \\
    $\always((\mathsf{green}\lor\mathsf{yellow})\Rightarrow \event\mathsf{blue}) \land \event\always (\mathsf{green}\lor\mathsf{magenta})$ & 380.49$_{\pm84.74 }$ & \textbf{570.37}$_{\pm138.98}$\\
    \bottomrule
                    \end{tabular}}
\end{table}

\subsection{Results on persistence tasks}
\label{app:event_always}

\textit{Persistence} (a.k.a.\ \textit{reach-stay}) tasks of the form $\event\always \mathsf a$ specify that a proposition needs to be true forever from some point on. We evaluate our approach on these tasks in the \textit{ZoneEnv} environment, which features a continuous action space and is thus particularly challenging for reach-stay specifications. \cref{tab:fg} compares the performance of our approach to the relevant baseline GCRL-LTL, where we report the average number of steps for which the agent successfully remains in the target region after executing the $\varepsilon$-action as the performance metric.\footnote{When the policy executes the $\varepsilon$-action, this indicates that the target proposition should from then on be true forever (see e.g.\ $q_2$ in \cref{fig:ldba}). Since GCRL-LTL does not explicitly model the $\varepsilon$-actions required for $\event\always$ specifications, we employ an approximation and execute the $\varepsilon$-action upon entering the target region for the first time, i.e.\ when the agent first enters the blue region for the task $\event\always\mathsf{blue}$.} The results confirm that our method can successfully handle complex persistence tasks, and performs better than the baseline.

\subsection{Adversarial tasks with safety constraints}\label{exp:adversarial}
We next demonstrate the advantages of our approach in terms of handling safety constraints on an adversarial task. This task is specifically designed to require considering safety requirements during high-level planning. We consider the configuration of the \textit{ZoneEnv} environment depicted in \cref{fig:adversarial} and the specification $\varphi = \neg\mathsf{blue}\until (\mathsf{green}\lor\mathsf{yellow})\land\event\mathsf{magenta}$.

\cref{fig:adversarial} shows a number of sample trajectories generated by DeepLTL (top row) and GCRL-LTL (bottom row). Evidently, GCRL-LTL fails at satisfying the task, since it decides on reaching the green region during high-level planning without considering the safety constraint $\neg\mathsf{blue}$. In contrast, DeepLTL considers this safety requirement when selecting the optimal reach-avoid sequence, and hence chooses to reach the yellow region instead.

Quantitatively, GCRL-LTL only succeeds in satisfying the task in $9.2\%$ of cases, whereas DeepLTL achieves a success rate of $79.6\%$ (averaged over 5 seeds and 100 episodes per seed). While LTL2Action does not have a high-level planning component, it considers the entire task including safety constraints via its GNN encoding, and averages a success rate of $50.2\%$.

\begin{table}[t]
    \centering
    \caption{Evaluation results of trained policies on long reachability (R-12) and reach-avoid (RA-6) tasks. We report the \textit{success rate} (SR) and average number of steps to satisfy the task ($\mu$). Results are averaged over 5 seeds and 500 episodes per seed. ``$\pm$'' indicates the standard deviation over seeds.}
    \label{tab:generalisation}
    \resizebox{.82\textwidth}{!}{
    \begin{tabular}{@{}lrrrrrrrr@{}}
        \toprule
         & \multicolumn{2}{c}{LTL2Action} &  & \multicolumn{2}{c}{GCRL-LTL} &  & \multicolumn{2}{c}{DeepLTL}  \\ \midrule
         & \multicolumn{1}{c}{SR ($\uparrow$)}            & \multicolumn{1}{c}{$\mu$ ($\downarrow$)}          &  & \multicolumn{1}{c}{SR ($\uparrow$)}           & \multicolumn{1}{c}{$\mu$ ($\downarrow$)}         &  & \multicolumn{1}{c}{SR ($\uparrow$)}            & \multicolumn{1}{c}{$\mu$ ($\downarrow$)}        \\ \midrule
R-12 & 0.89$_{\pm0.14 }$ & 47.80$_{\pm7.74 }$ & & 0.93$_{\pm0.05 }$ & 60.15$_{\pm2.59 }$ & & \textbf{0.98}$_{\pm0.01 }$ & \textbf{43.33}$_{\pm0.45 }$ \\
RA-6 & 0.13$_{\pm0.03 }$ & 56.37$_{\pm0.89 }$ & & 0.62$_{\pm0.09 }$ & 30.64$_{\pm1.59 }$ & & \textbf{0.95}$_{\pm0.01 }$ & \textbf{24.94}$_{\pm0.36 }$ \\
\bottomrule
    \end{tabular}}
\end{table}

\subsection{Generalisation to longer sequences}\label{exp:generalisation}
In this section, we investigate the ability of our approach to generalise to longer sequences. Our evaluation so far has already demonstrated that our method can successfully learn general behaviour for satisfying LTL specifications by only training on simple reach-avoid sequences. We now extend our analysis by specifically investigating tasks that require a large number of steps to solve.

In particular, we consider the following two task spaces in the \textit{LetterWorld} environment: sequential reachability objectives of depth 12 (i.e.\ $\event (p_1 \land (\event p_2 \land \ldots \land \event p_{12}))$ and reach-avoid specifications of depth 6 (i.e.\ $\neg p_1\until (p_2 \land \ldots \land (\neg p_{11}\until p_{12}))$). Note that the longest reach-avoid sequences sampled during training are of length 3. In \cref{tab:generalisation}, we report the results of our method and the baselines on randomly sampled tasks from the task spaces described above. We observe that our approach generalises well to longer sequences and outperforms the baselines in terms of satisfaction probability and efficiency.

\begin{figure}
    \centering
    \includegraphics[width=0.8\linewidth]{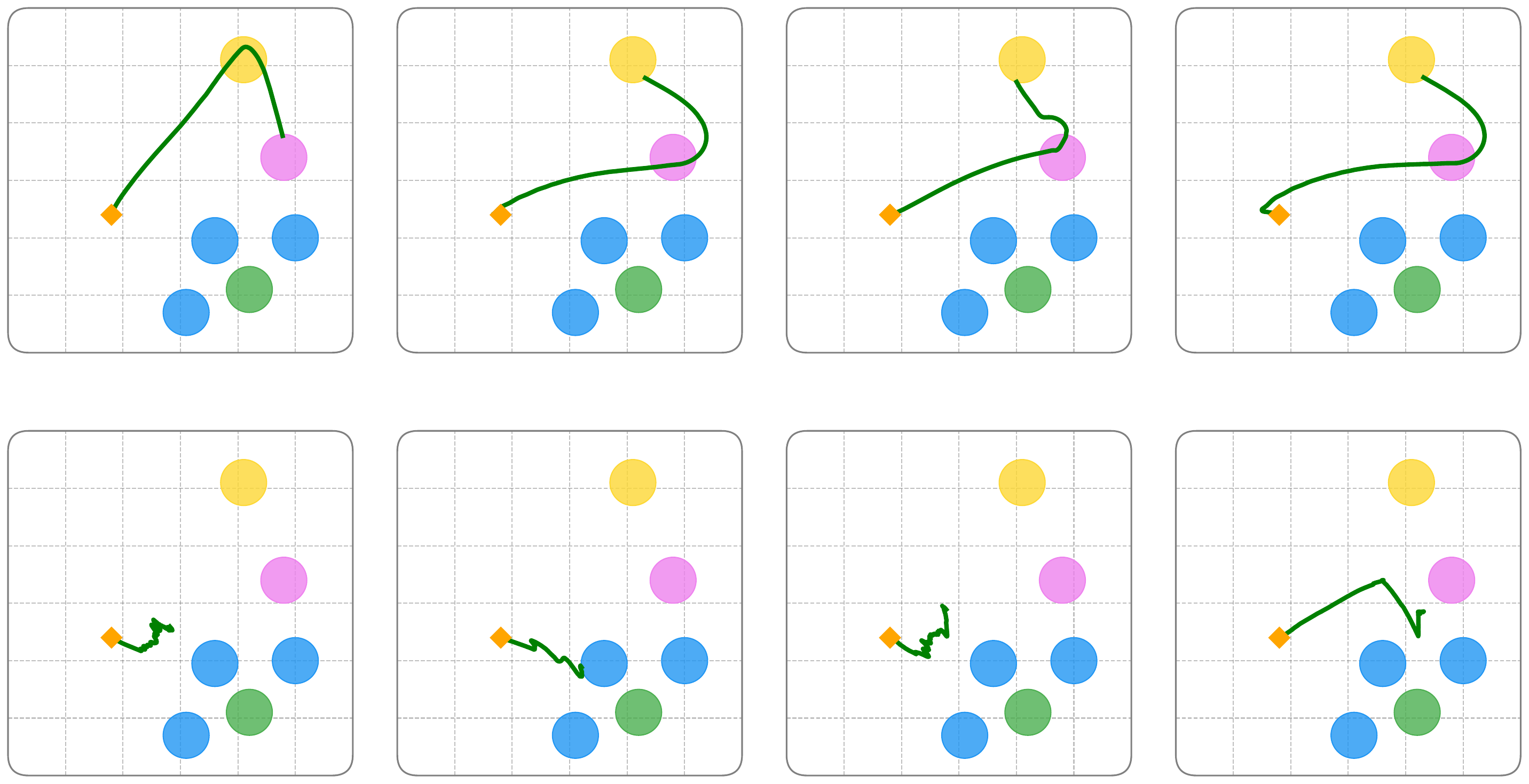}
    \caption{Trajectories of DeepLTL (top row) and GCRL-LTL (bottom row) on the adversarial configuration of the \textit{ZoneEnv} environment. The specification to be completed is $\varphi = \neg\mathsf{blue}\until (\mathsf{green}\lor\mathsf{yellow})\land\event\mathsf{magenta}$. GCRL-LTL ignores safety constraints during high-level planning, and hence performs poorly. In contrast, DeepLTL yields trajectories that satisfy the specification.}
    \label{fig:adversarial}
\end{figure}

\subsection{Ablation study: curriculum learning}\label{exp:curriculum} 
We conduct an ablation study in the \textit{LetterWorld} environment to investigate the impact of curriculum learning. We train our method without any training curriculum by directly sampling random reach-avoid sequences of length 3, with potentially multiple propositions to reach and avoid at each stage. The evaluation curves on \textit{reach/avoid} specifications over training (\cref{fig:ablation}) demonstrate that curriculum training improves sample efficiency and reduces variance across seeds.

\begin{figure}
    \centering
    \includegraphics[width=0.35\linewidth]{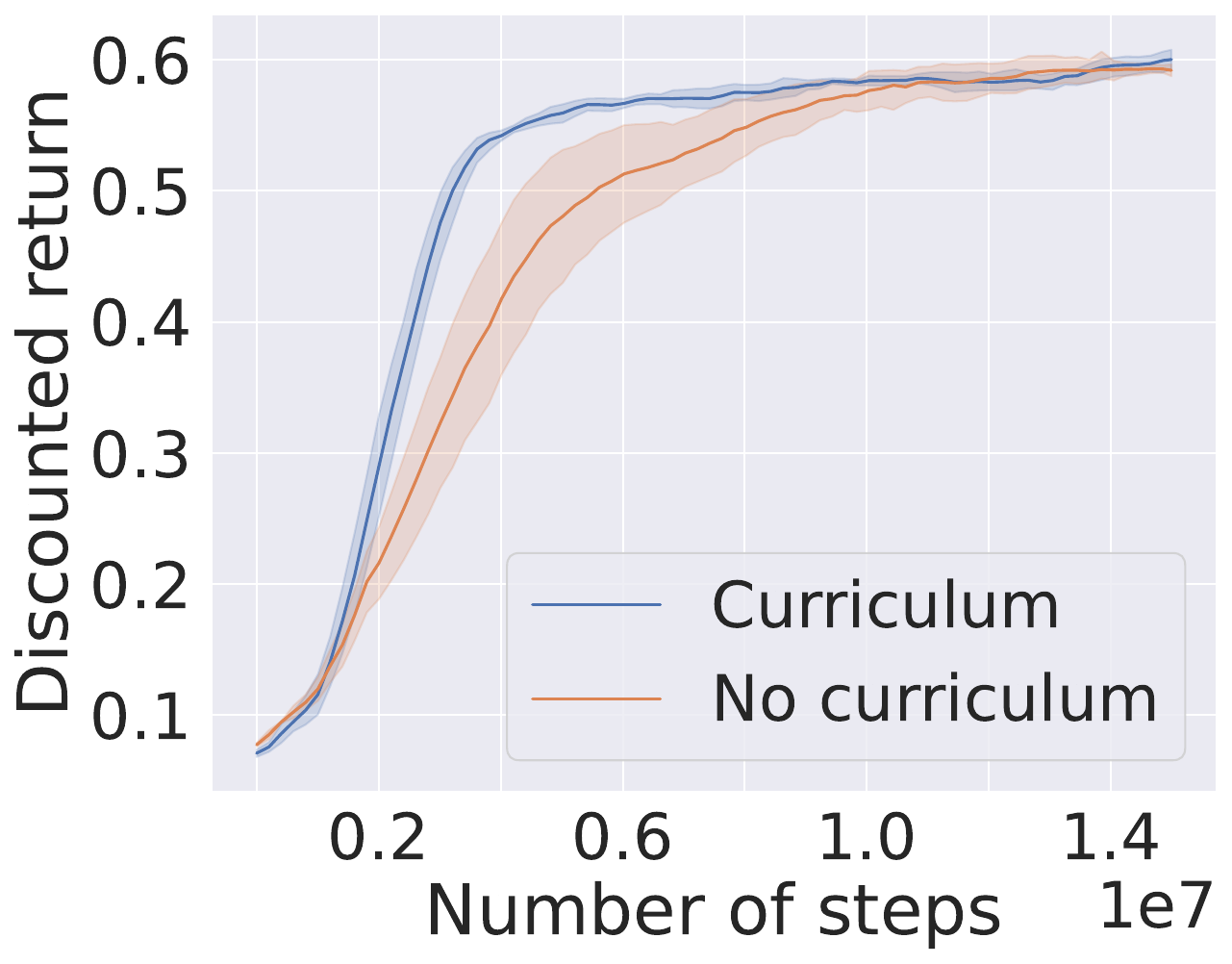}
    \caption{Evaluation curves of training with and without a curriculum on the \textit{LetterWorld} environment. Each datapoint is collected by averaging the discounted return of the policy across 50 episodes with randomly sampled \textit{reach/avoid} specifications, and shaded areas
    indicate 90\% confidence intervals over 5 different random seeds.}
    \label{fig:ablation}
\end{figure}

\begin{table}[]
    \centering
    \caption{Comparison of Transformer- and GRU-based sequence modules. We report the \textit{success rate} (SR) and average number of steps to satisfy the task ($\mu$). Results are averaged over 5 seeds and 500 episodes per seed. ``$\pm$'' indicates the standard deviation over seeds.}
    \label{tab:transformer}
    \resizebox{.6\textwidth}{!}{
    \begin{tabular}{@{}lrrrrr@{}}
        \toprule
        & \multicolumn{2}{c}{DeepLTL-Transformer} &  & \multicolumn{2}{c}{DeepLTL-GRU}  \\ \midrule
        & \multicolumn{1}{c}{SR ($\uparrow$)}            & \multicolumn{1}{c}{$\mu$ ($\downarrow$)}          &  & \multicolumn{1}{c}{SR ($\uparrow$)}           & \multicolumn{1}{c}{$\mu$ ($\downarrow$)}        \\ \midrule
$\varphi_{6}$ & 0.78$_{\pm0.09 }$ & 462.03$_{\pm52.60 }$ & & \textbf{0.92}$_{\pm0.06 }$ & \textbf{303.38}$_{\pm19.43 }$ \\
$\varphi_{7}$ & 0.46$_{\pm0.04 }$ & 438.24$_{\pm42.35 }$ & & \textbf{0.91}$_{\pm0.03 }$ & \textbf{299.95}$_{\pm09.47 }$ \\
$\varphi_{8}$ & 0.93$_{\pm0.03 }$ & 357.34$_{\pm51.86 }$ & & \textbf{0.96}$_{\pm0.04 }$ & \textbf{259.75}$_{\pm08.07 }$ \\
$\varphi_{9}$ & 0.74$_{\pm0.02 }$ & 275.11$_{\pm31.56 }$ & & \textbf{0.90}$_{\pm0.03 }$ & \textbf{203.36}$_{\pm14.97 }$ \\
$\varphi_{10}$ & 0.80$_{\pm0.15 }$ & 260.58$_{\pm29.48 }$ & & \textbf{0.91}$_{\pm0.02 }$ & \textbf{187.13}$_{\pm10.61 }$ \\
$\varphi_{11}$ & 0.90$_{\pm0.07 }$ & 137.46$_{\pm21.26 }$ & & \textbf{0.98}$_{\pm0.01 }$ & \textbf{106.21}$_{\pm07.88 }$ \\
\bottomrule
                \end{tabular}
                }
                \end{table}

\subsection{Ablation study: sequence module}\label{exp:transformer}
We conduct a further ablation study to investigate the impact of replacing the RNN in the sequence module with a Transformer~\citep{vaswani2017Attention} model. Since the reach-avoid sequences arising from common LTL tasks are generally relatively short (2-15 tokens), we expect the simplicity and effectiveness of RNNs to outweigh the benefits of the more complicated Transformer architecture in our setting. For our ablation study, we use a Transformer encoder to learn an embedding of the reach-avoid sequence as the final representation of a special [CLS] token, and keep the rest of the model architecture the same. We use an encoder model with an embedding size of 32, 4 attention heads, and a 512-dimensional MLP with dropout of 0.1. Despite this being a very small Transformer model, it still has more than 5 times the number of parameters of the GRU\@.

We report evaluation results for \textit{complex} specifications in the \textit{ZoneEnv} environment in \cref{tab:transformer}, which strongly support our hypothesis that a simpler RNN-based sequence module is more effective.

\subsection{Impact of the number of atomic propositions}\label{exp:atomic}
We conduct an experiment to evaluate the impact of the number of atomic propositions. For this, we consider modified versions of the \textit{FlatWorld} environment with $n_r\in\{8,12,16,20\}$ regions of different colours (that is, in each version we have $n_r$ regions and $|AP| = n_r$). The state space of the \textit{FlatWorld} environment does not depend on the atomic propositions, therefore we can use the same model architecture throughout the experiment.

The results are shown in \cref{fig:aps}. We observe that policy performance generally decreases as the number of atomic propositions increases, both for DeepLTL and GCRL-LTL\@. This result is expected, since we increase the number of possible tasks while keeping the number of policy parameters fixed. We also observe that the variance of success becomes larger as the number of atomic propositions increases. This can be explained by the fact that, in the case of a large number of atomic propositions, the goal-conditioned RL process may (for some random seeds) yield a policy that cannot consistently satisfy every single proposition. If there exists a proposition $a\in AP$ that the policy has failed to learn how to satisfy, the policy will consequently not be successful in completing any task that requires $a$ to be achieved at any stage, resulting in a low success rate. Finally, we observe that the performance reduction for DeepLTL generally is less than or similar to the performance reduction for GCRL-LTL\@.

\begin{figure}
    \centering
    \begin{subfigure}{0.35\textwidth}
        \centering
        \includegraphics[width=\textwidth]{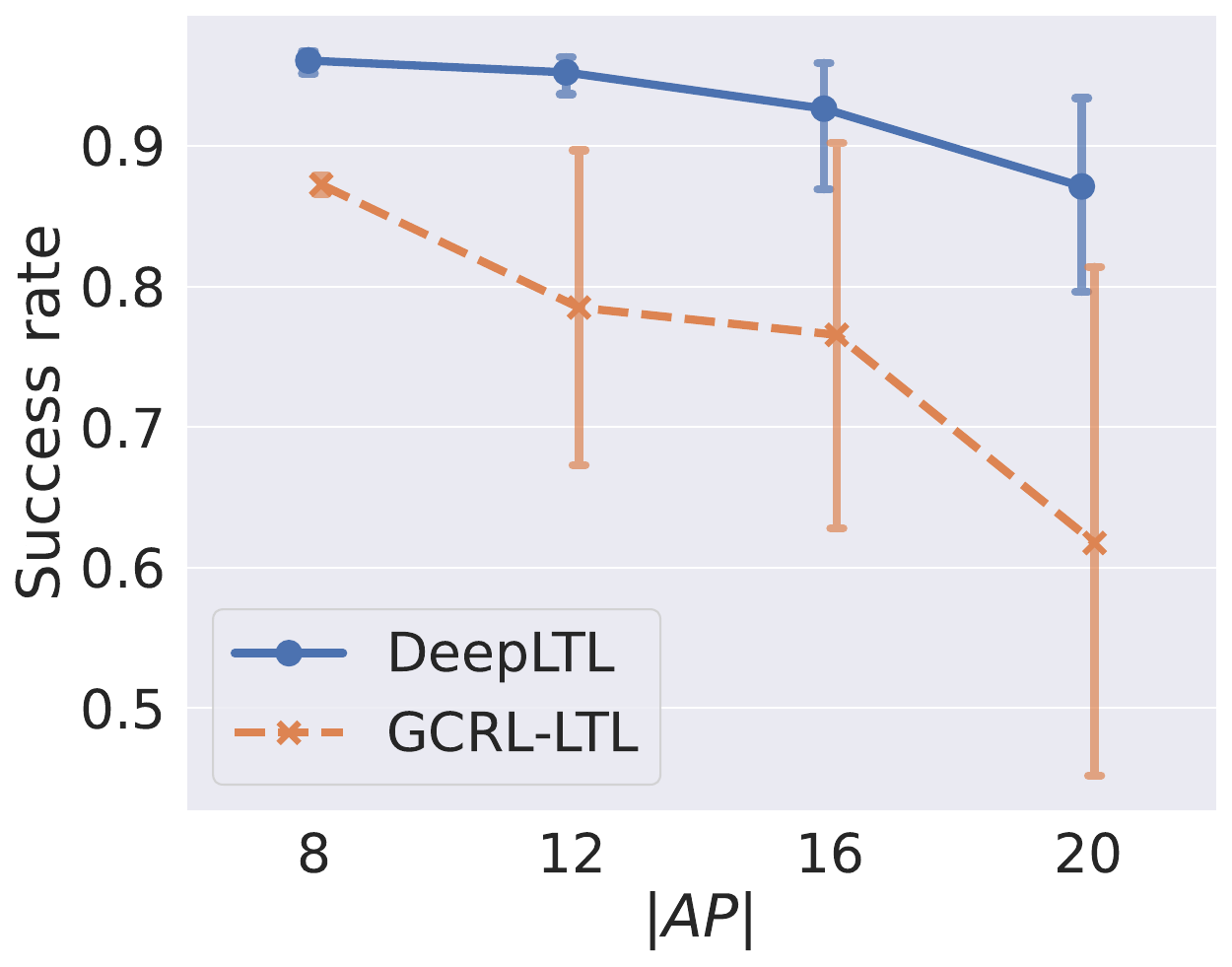}
    \end{subfigure}
    \hspace{.4cm}
    \begin{subfigure}{0.36\textwidth}
        \centering
        \includegraphics[width=\textwidth]{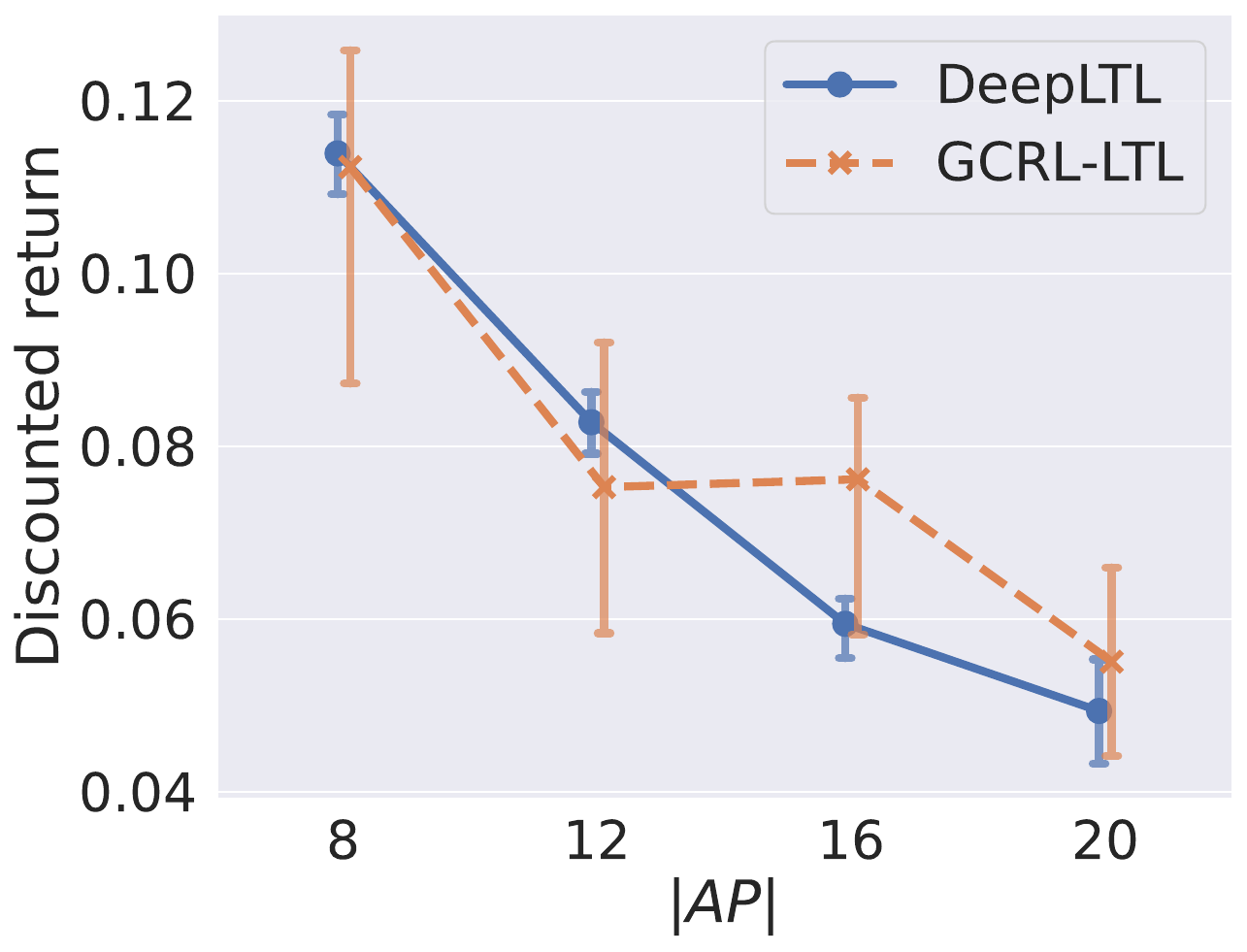}
    \end{subfigure}
    \caption{Impact of varying the number of atomic propositions on success rate (left) and discounted return (right). Each datapoint is an average over 500 episodes with randomly sampled \textit{reach/avoid} specifications, and error bars indicate 90\% confidence intervals over 5 different random seeds.}
    \label{fig:aps}
\end{figure}

\subsection{Impact of specification complexity}\label{exp:complexity}
In this section, we provide additional insights regarding the performance of our approach with respect to the complexity of the given specification. An advantage of our method is the representation of tasks as reach-avoid sequences. This representation explicitly lists the possible ways of satisfying a specification, making our approach in principle applicable to arbitrarily complex formulae. As an example, we consider the following specification in the \textit{LetterWorld} environment:
\begin{align*}
\begin{split}
    &(\mathsf{a} \Rightarrow \event (\mathsf{b} \land (\event (\mathsf{c} \land \event \mathsf{d})))) \until 
\big( 
    (\event ((\mathsf{k} \lor \mathsf{b}) \land ((\neg \mathsf{b} \land \neg \mathsf{d} \land \neg \mathsf{j}) \until \mathsf{l}))) 
    \land (\neg \mathsf{h} \until \mathsf{k})
\big) 
\lor\\
&\event 
\big( 
    \mathsf{i} \land (((\mathsf{b} \lor \mathsf{c} \lor \mathsf{k}) \Rightarrow \event (\mathsf{j} \land \event \mathsf{i})) \until 
    ((\mathsf{l} \lor \mathsf{f}) \land (\neg (\mathsf{b} \lor \mathsf{c} \lor \mathsf{d}) \until \mathsf{g})))
\big),
\end{split}
\end{align*}
which encompasses 11 unique atomic propositions and a complex composition of different LTL operators. The corresponding LDBA consists of a large number of 656 states. However, by explicitly reasoning about the different ways of satisfying the specification, DeepLTL achieves an average success rate of $98\%$ (across 5 different random seeds).

At the same time, we note that even seemingly simple formulae can be difficult to accomplish. In \textit{LetterWorld}, this is especially true if they involve a large number of propositions to avoid, such as the specification
$
    \neg (\mathsf{a} \lor \mathsf{b} \lor \mathsf{c} \lor \mathsf{d} \lor \mathsf{e}) \until (\mathsf{f} \land \event \mathsf{g}),
$
on which DeepLTL achieves a comparatively low success rate of $82\%$ across 5 random seeds. In general, whether a given formula is challenging to satisfy depends primarily on the dynamics of the underlying MDP\@.

Finally, we note that our approach relies on being able to construct an LDBA for a given specification, which is doubly-exponential in the size of the formula in the worst case~\citep{sickert2016LimitDeterministic}. Inevitably, there thus exist LTL specifications to which our approach is not applicable, since it is infeasible to construct the corresponding LDBA\@.

\subsection{Comparison to RAD-embeddings}\label{exp:rad}
\label{sec:rad}
\cref{tab:rad} lists experimental results of DeepLTL and RAD-embeddings~\citep{yalcinkaya2024Compositional} on finite-horizon tasks in the \textit{ZoneEnv} environment. We consider the tasks $\varphi_6$\dash$\varphi_{11}$ from \cref{tab:finite_specs}, \textit{reach/avoid} tasks (see \cref{sec:setup}), as well as a \textit{parity} task, and two tasks with large associated automata.\footnote{\textit{large1} corresponds to the formula $((\mathsf{green} \Rightarrow (\neg \mathsf{blue} \until ((\mathsf{magenta} \land \event \mathsf{blue}) \land (\neg \mathsf{green} \until \mathsf{blue})))) \until\allowbreak (\mathsf{yellow} \land \event (\mathsf{magenta} \land \event \mathsf{blue})))
$.}\footnote{The formula for \textit{large2} is $(\mathsf{green} \Rightarrow \event (\mathsf{yellow} \land (\event (\mathsf{magenta} \land \event \mathsf{blue})))) \until 
( 
    (\mathsf{blue} \land \event \mathsf{magenta}) \land (\neg \mathsf{yellow} \until \mathsf{green})
).
$} The \textit{parity} task specifies that the agent has to reach $\mathsf{blue}$ while visiting $\mathsf{green}$ an even number of times, and $\mathsf{yellow}$ while visiting $\mathsf{magenta}$ an even number of times. Note that this task is not expressible in LTL, but our method can still be applied to the corresponding LDBA\@.

For RAD-embeddings, we use frozen GATv2 embeddings pre-trained on compositional RAD DFAs, and the policy is also trained on compositional RAD DFAs for 15M environment interaction steps. Throughout the experiments, we use the hyperparameters provided in \citep{yalcinkaya2024Compositional}.

The results shows that RAD-embeddings and DeepLTL achieve similar success rates, but DeepLTL generally requires significantly fewer steps until task completion. They also demonstrate that our approach indeed generalises to tasks not expressible in LTL, such as \textit{parity}. Finally, we note that our method achieves substantially higher success rates on tasks with an increased number of states in the corresponding automaton (\textit{large1/2}). These results further support the advantages of our approach of combining a high-level reasoning step with learned embeddings of reach-avoid sequences.

\begin{table}[]
    \centering
    \caption{Experimental comparison to RAD-embeddings on different finite-horizon tasks in the \textit{ZoneEnv} environment. The numbers in parentheses represent the number of states in the LDBA/DFA\@. We report the \textit{success rate} (SR) and average number of steps to satisfy the task ($\mu$). Results are averaged over 5 seeds and 500 episodes per seed. ``$\pm$'' indicates the standard deviation over seeds.}
    \label{tab:rad}
    \resizebox{.68\textwidth}{!}{
    \begin{tabular}{@{}lrrrrr@{}}
        \toprule
        & \multicolumn{2}{c}{RAD-embeddings} &  & \multicolumn{2}{c}{DeepLTL}  \\ \midrule
        & \multicolumn{1}{c}{SR ($\uparrow$)}            & \multicolumn{1}{c}{$\mu$ ($\downarrow$)}          &  & \multicolumn{1}{c}{SR ($\uparrow$)}           & \multicolumn{1}{c}{$\mu$ ($\downarrow$)}        \\ \midrule
$\varphi_{6}$ (6)& \textbf{0.94}$_{\pm0.06 }$ & 469.72$_{\pm59.99 }$ & & 0.92$_{\pm0.06 }$ & \textbf{303.38}$_{\pm19.43 }$ \\
$\varphi_{7}$ (6) & 0.88$_{\pm0.06 }$ & 474.34$_{\pm62.14 }$ & & \textbf{0.91}$_{\pm0.03 }$ & \textbf{299.95}$_{\pm09.47 }$ \\
$\varphi_{8}$ (8) & \textbf{0.96}$_{\pm0.06 }$ & 399.38$_{\pm68.63 }$ & & \textbf{0.96}$_{\pm0.04 }$ & \textbf{259.75}$_{\pm08.07 }$ \\
$\varphi_{9}$ (5) & \textbf{0.92}$_{\pm0.02 }$ & 304.88$_{\pm38.42 }$ & & 0.90$_{\pm0.03 }$ & \textbf{203.36}$_{\pm14.97 }$ \\
$\varphi_{10}$ (4) & 0.87$_{\pm0.11 }$ & 336.90$_{\pm88.32 }$ & & \textbf{0.91}$_{\pm0.02 }$ & \textbf{187.13}$_{\pm10.61 }$ \\
$\varphi_{11}$ (4) & \textbf{0.99}$_{\pm0.01 }$ & 149.02$_{\pm19.72 }$ & & 0.98$_{\pm0.01 }$ & \textbf{106.21}$_{\pm07.88 }$ \\
\textit{reach/avoid} (4/5) & 0.91$_{\pm0.04 }$ & 401.12$_{\pm33.92 }$ & & \textbf{0.93}$_{\pm0.02 }$ & \textbf{261.97}$_{\pm11.79 }$ \\
\textit{parity} (5) & \textbf{0.98}$_{\pm0.01 }$ & 292.21$_{\pm27.27 }$ & & 0.96$_{\pm0.03 }$ & \textbf{211.07}$_{\pm12.16 }$ \\
\textit{large1} (15) & 0.65$_{\pm0.32 }$ & 550.51$_{\pm95.79 }$ & & \textbf{0.92}$_{\pm0.06 }$ & \textbf{314.72}$_{\pm23.23 }$ \\
\textit{large2} (19) & 0.66$_{\pm0.31 }$ & 490.22$_{\pm40.23 }$ & & \textbf{0.94}$_{\pm0.02 }$ & \textbf{322.38}$_{\pm24.02 }$ \\
\bottomrule
                \end{tabular}
                }
                \end{table}

\subsection{Trajectory visualisations}
\label{app:visualisations}
We qualitatively confirm that DeepLTL produces the desired behaviour by visualising trajectories in the \textit{ZoneEnv} and \textit{FlatWorld} environments for a variety of tasks (\cref{fig:traj1,fig:traj2}).

\newpage

\begin{figure}
    \centering
    \includegraphics[width=.95\linewidth]{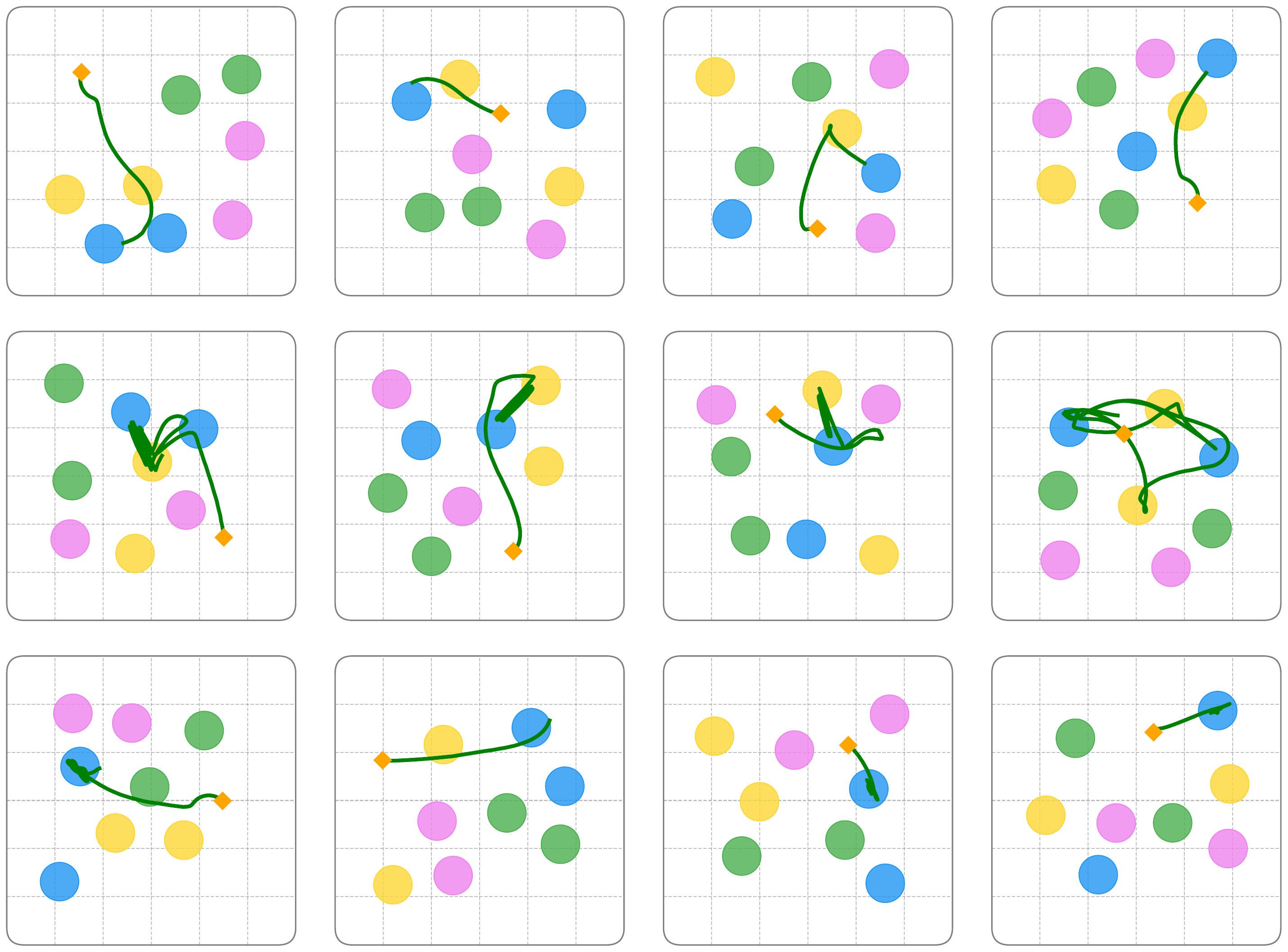}
    \caption{Example trajectories of DeepLTL in the \textit{ZoneEnv} environment.
    (Top row) Trajectories for the task $\event(\mathsf{yellow}\land (\neg\mathsf{green}\until\mathsf{blue}))$.
    (Middle row) Trajectories for the task $\always\event\mathsf{yellow}\land\always\event\mathsf{blue}\land\always\neg\mathsf{green}$.
    (Bottom row) Trajectories for the task $\event\always\mathsf{blue}\land\always\neg\mathsf{magenta}$.
    }
    \label{fig:traj1}
\end{figure}

\begin{figure}
    \centering
    \includegraphics[width=.95\linewidth]{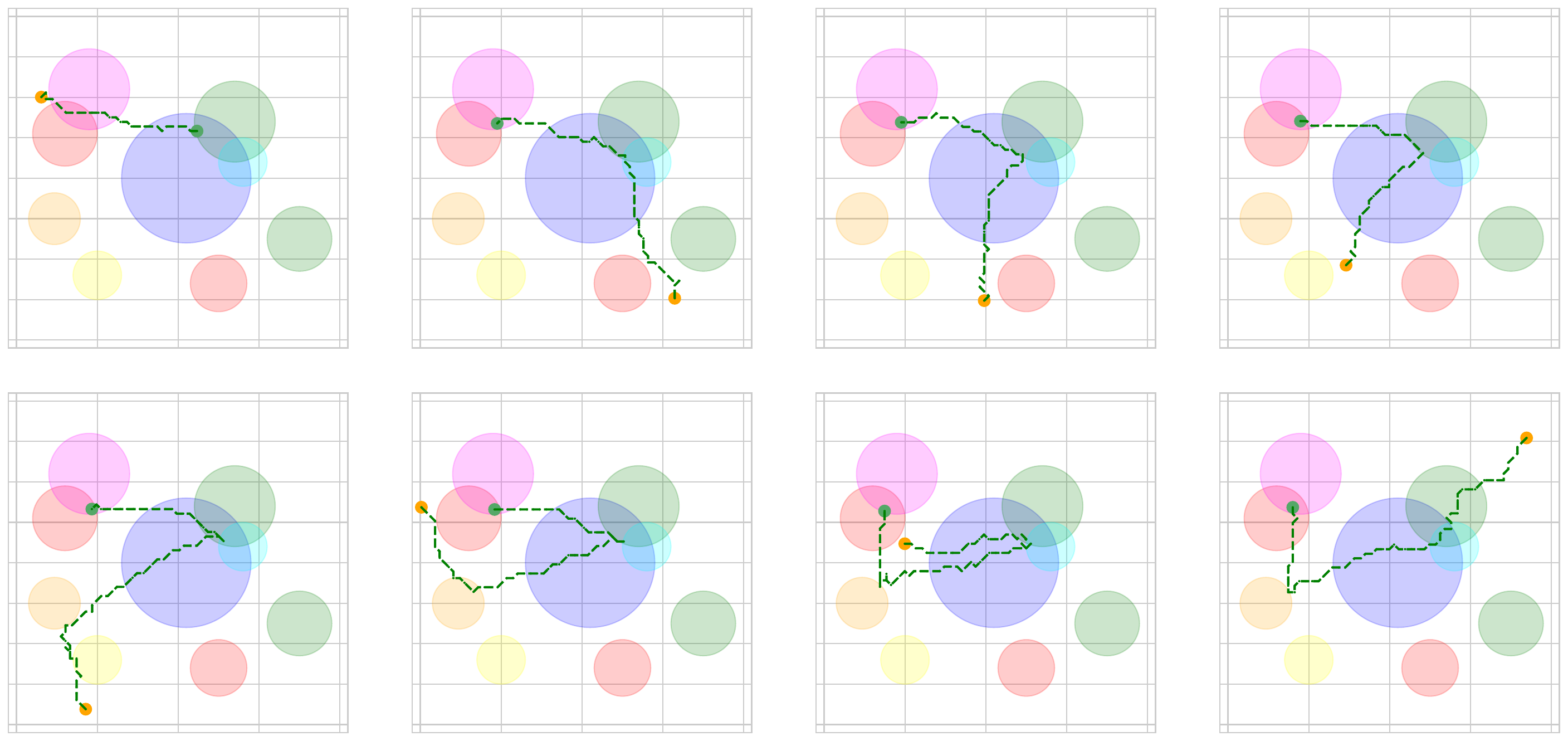}
    \caption{Example trajectories of DeepLTL in the \textit{FlatWorld} environment.
    (Top row) Trajectories for the task $\event(\mathsf{red}\land \mathsf{magenta})\land\event(\mathsf{blue}\land\mathsf{green})$.
    (Bottom row) Trajectories for the task $(\neg \mathsf{red} \until (\mathsf{green} \land \mathsf{blue} \land \mathsf{aqua})) \land \event (\mathsf{orange} \land (\event (\mathsf{red} \land \mathsf{magenta})))$.
    }
    \label{fig:traj2}
\end{figure}
\end{document}